\def\eqref#1{equation~\ref{#1}}
\def\1{\bm{1}}
\renewcommand\vec[1]{\bm{#1}}
\def\vtheta{{\bm{\theta}}}
\def\va{{\bm{a}}}
\def\vf{{\bm{f}}}
\def\vg{{\bm{g}}}
\def\vs{{\bm{s}}}
\def\vt{{\bm{t}}}
\def\vx{{\bm{x}}}
\def\vy{{\bm{y}}}
\def\mA{{\bm{A}}}
\def\mB{{\bm{B}}}
\def\mC{{\bm{C}}}
\def\mH{{\bm{H}}}
\def\mI{{\bm{I}}}
\def\mX{{\bm{X}}}
\def\mY{{\bm{Y}}}
\DeclareMathAlphabet{\mathsfit}{\encodingdefault}{\sfdefault}{m}{sl}
\SetMathAlphabet{\mathsfit}{bold}{\encodingdefault}{\sfdefault}{bx}{n}
\def\gD{{\mathcal{D}}}
\def\gL{{\mathcal{L}}}
\def\gX{{\mathcal{X}}}
\def\sR{{\mathbb{R}}}
\newcommand{\E}{\mathbb{E}}
\newcommand\smaller[2][0.80]{{\scalefont{#1}#2}}
\NewDocumentCommand{\DeclareAcr}{s m o m}{
  \IfValueTF{#3}
    {\acrodef{#2}[#3]{#4}}
    {\acrodef{#2}{#4}}
  \IfBooleanTF{#1}
    {\expandafter\newcommand\csname #2\endcsname{\acs{#2}\xspace}}
    {\expandafter\newcommand\csname #2\endcsname{\ac{#2}\xspace}}
}
\newcommand{\XQWN}{\textsc{crossq+wn}\xspace}
\newcommand{\XQC}{\textsc{xqc}\xspace}
\newcommand{\SIMBA}{\textsc{simba-v}\smaller{2}\xspace}
\newcommand{\DRQ}{\textsc{drq-v}\smaller{2}\xspace}
\newcommand{\MrQ}{\textsc{mrq}\xspace}
\newcommand{\BRO}{\textsc{bro}\xspace}
\newcommand{\BRC}{\textsc{brc}\xspace}
\newcommand{\CFiftyOne}{\textsc{c}\smaller{51}\xspace}
\newcommand{\mujoco}{\texttt{MuJoCo}\xspace}
\newcommand{\humanoid}{\texttt{humanoid}\xspace}
\newcommand{\NEnvsVision}{$\text{15}$\xspace}
\newcommand{\NEnvsProprio}{$\text{55}$\xspace}
\newcommand{\NEnvsTotal}{$\text{70}$\xspace}
\newcommand{\NEnvsHB}{$\text{14}$\xspace}
\newcommand{\NEnvsDMC}{$\text{25}$\xspace}
\newcommand{\NEnvsMYO}{$\text{10}$\xspace}
\newcommand{\NEnvsMUJOCO}{$\text{6}$\xspace}
\newcommand{\FLOPS}{\textsc{flop/s}\xspace}
\newcommand{\relu}{\texttt{ReLU}\xspace}
\renewcommand{\tanh}{\texttt{tanh}\xspace}
\newcommand{\jax}{\textsc{jax}\xspace}
\newcommand\marksymbol[2]{\tikz[#2,scale=1.2]\pgfuseplotmark{#1};}
\newtheorem{theorem}{Theorem}
\newtheorem{lemma}{Lemma}
\newtheorem{proposition}{Proposition}
\newtheorem{definition}{Definition}
\newtheorem{assumption}{Assumption}
\definecolor{dense_color}{HTML}{64AB70}
\definecolor{bn_color}{HTML}{E13971}
\definecolor{ln_color}{RGB}{68,119,170}
\title{XQC: Well-conditioned Optimization\\ Accelerates Deep Reinforcement Learning}
\author{
Daniel Palenicek$^{1,2}$ Florian Vogt$^3$ Joe Watson$^4$ Ingmar Posner$^4$ Jan Peters$^{1,2,5,6}$ \\
$^1$Technical University of Darmstadt $^2$hessian.AI $^3$University of Freiburg $^4$University of Oxford \\
$^5$German Research Center for AI (DFKI) $^6$Robotics Institute Germany (RIG) \\
\texttt{daniel.palenicek@tu-darmstadt.de}
\vspace{-2em}
}
\begin{document}

\maketitle

\begin{abstract}
Sample efficiency is a central property of effective deep reinforcement learning algorithms.
Recent work has improved this through added complexity, such as larger models, exotic network architectures, and more complex algorithms, 
which are typically motivated purely by empirical performance. 
We take a more principled approach by focusing on the optimization landscape of the critic network. Using the eigenspectrum and condition number of the critic's Hessian, we systematically investigate the impact of common architectural design decisions on training dynamics.
Our analysis reveals that a novel combination of batch normalization~(\textsc{bn}), weight normalization~(\textsc{wn}), and a distributional cross-entropy~(\textsc{ce}) loss produces condition numbers orders of magnitude smaller than baselines.
This combination also naturally bounds gradient norms, a property critical for maintaining a stable effective learning rate under non-stationary targets and bootstrapping.
Based on these insights, we introduce \textsc{xqc}: a well-motivated, sample-efficient deep actor-critic algorithm built upon soft actor-critic that embodies these optimization-aware principles.
We achieve state-of-the-art sample efficiency across \NEnvsProprio proprioception and \NEnvsVision vision-based continuous control tasks, all while using significantly fewer parameters than competing methods.
Our code is available at \href{https://danielpalenicek.github.io/projects/xqc}{danielpalenicek.github.io/projects/xqc}.
\end{abstract}

\begin{figure}[h!]
    \centering
    \vspace{-.5em}
    \vspace{-.5em}
    \includegraphics[width=\linewidth]{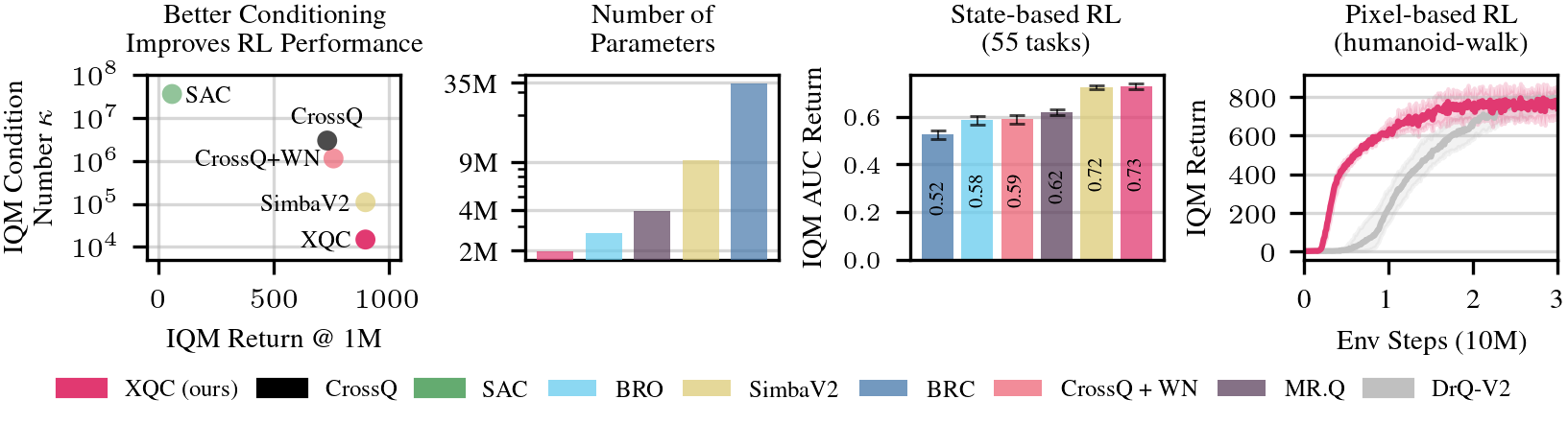}
    \vspace{-2.3em}
    \caption{
    \textbf{Well-conditioned network architectures yield state-of-the-art \RL performance.}
    Our algorithm, \XQC with a \BN and \WN-based architecture and a \CE loss, achieves competitive performance against state-of-the-art baselines across \NEnvsProprio proprioceptive continuous control tasks from four different benchmarks with a single set of hyperparameters.
    Notably, with $\sim\!4.5\times$ fewer parameters and $\sim\!5\times$ less compute in terms of \FLOPS than \SIMBA, the closest competitor.
    \XQC's efficiency carries over to \RL from pixels on \NEnvsVision vision-based \DMC tasks, significantly improving on \DRQ.
    }
    \label{fig:hero}
    \vspace{-.5em}
\end{figure}

\acresetall

\section{Introduction}
Sample efficiency remains a major challenge in deep \RL.
Methods that can learn effectively from limited interactions are crucial for applying \RL in domains such as robotics~\citep{kober2008policy,bohlinger2025gait}, where generating data on real hardware is costly and time-consuming.
Recent advances in model-free \RL have primarily been driven by a paradigm of scaling---larger networks, higher \UTD ratios, and ever-increasing computational budgets~\citep{
nikishin2022primacy,
nauman2024bigger,
lee2024simba,
palenicek2025scaling}.
These works view architectural improvements primarily as a means to scale up stably
and in turn improve sample efficiency.
While proven effective, this \textit{`bigger is better'} paradigm comes at the cost of computational efficiency and overlooks a more fundamental question:
\textit{Can we improve performance not by adding complexity, but by creating a better-conditioned optimization problem?}

To develop this principled understanding, we conduct a systematic investigation into three commonly used architectural components whose roles in \RL are often guided by heuristics.
First, we examine normalization layers. 
The \RL community frequently uses \LN~\citep{ba2016layernorm}, e.g., \citet{ball2023efficient}, in contrast to \BN~\citep{ioffe2015batch}, which until recently was thought to be problematic in the \RL setting \citep{bhatt2024crossq}.
Second, we consider \WN~\citep{lyle2024normalization,loshchilov2025ngpt,palenicek2025scaling} by periodically projecting the network's weights to the unit sphere, permitted through the normalization layers' \textit{scale invariance} property. A technique known to improve the \ELR~\citep{van2017l2}.
Lastly, we study the critic's loss function. Distributional critics using a \CE loss have grown in popularity~\citep{bellemare2017c51}. 
The conventional argument for their adoption is that modeling the full distribution of returns provides a better learning signal compared to \MSE regression~\citep{farebrotherstop}; there is evidence that this loss is easier to optimize \citep{imani2018improving}.

Through a systematic eigenvalue analysis of the critic's Hessian, we provide a principled explanation for \textit{why} different architectures outperform others.
Our analysis first shows that \BN consistently produces better-conditioned local loss landscapes than \LN during learning, with condition numbers that are orders of magnitude smaller.
Second, our investigation of the critic loss reveals that, beyond its representational advantages, the \CE loss induces a remarkably well-conditioned optimization landscape compared to the \MSE loss.
We find that the combination of \BN, \WN, and a categorical \CE loss works in synergy to dramatically improve the conditioning of the optimization problem and stabilize the \ELR, a key metric for maintaining plasticity in deep \RL.
In summary, we claim the following contributions:
\begin{enumerate}[leftmargin=2em]
    \item \XQC, a simple and efficient extension to soft actor critic, uses the powerful synergy between \BN, \WN, and a distributional critic with a \CE Bellman error loss for sample-efficient learning.
    \item A Hessian eigenvalue analysis of modern deep \RL critics, revealing the superior conditioning properties of distributional critic losses over the mean squared error.
    \item Extensive empirical validation on \NEnvsProprio proprioception and \NEnvsVision vision-based continuous control tasks, demonstrating state-of-the-art performance against more complex, larger-scale methods.
\end{enumerate}

\section{Preliminaries}
\label{sec:prelim}
This section briefly introduces the necessary background and notation for this paper.

\textbf{Deep reinforcement learning.}
In this work, we assume the standard \RL setting~\citep{Sutton1998}, where an agent attempts to learn a policy that maximizes its expected discounted return. 
Our experiments are based on the popular off-policy actor-critic algorithm \SAC~\citep{haarnoja2018sac}, where policy and critic are represented by neural networks.
A key quantity in reinforcement learning with function approximation is the Bellman error $\Delta_\vtheta$,
\begin{align}
    \Delta_\vtheta(\vs,\va) = Q(\vs,\va) - Q_\vtheta(\vs,\va),
    \quad
    Q(\vs,\va) = r(\vs,\va) + \gamma\,\E_{\vs'\sim p(\cdot\mid\vs,\va)}[V(\vs')],
\end{align}
where $Q$ and $V$ are the `soft' parametric critic and value functions, respectively.
Minimizing this Bellman error effectively is key to the success of actor-critic methods~\citep{Sutton1998}.
This error is typically minimized with the mean squared regression loss and gradient-based optimization.
The distributional \CFiftyOne algorithm~\citep{bellemare2017c51} reformulates the task as a classification problem. Instead of a scalar estimate, the function approximator outputs the logits of a categorical distribution over the full support of $Q$.
The Bellman error can then be minimized using a categorical \CE loss, which has been shown to improve performance and stability~\citep{farebrotherstop}.

\textbf{Analyzing gradient-based optimization.}
To analyze the optimization of our gradient-based updates, we consider their first- and second-order aspects.
For gradient-based optimization with parameter normalization, we must consider the effective learning rate (Definition \ref{def:elr}).

\begin{definition}
\label{def:elr}
(Effective learning rate, \ELR, \citet{van2017l2}).
For a scale-invariant function $f(\vtheta) = f(\lambda\,\vtheta), \lambda{\,>\,}0$, the `effective' learning rate $\tilde{\eta}$ for an update 
$f(\vtheta{\,+\,}\eta\,\vg(\vtheta))$ with gradients $\vg(\vtheta)$ is the learning rate when taking this scale invariance into account,
\begin{equation*}
    f(\vtheta{\,+\,}\eta\,\vg(\vtheta))
    =
    f(\tilde{\vtheta}{\,+\,}\tilde{\eta}\,\vg(\tilde{\vtheta)}),
    \quad
    \tilde{\eta} = \eta / \|\vec{\theta}\|_2,
    \quad
    \tilde{\vtheta} = \vtheta / \|\vec{\theta}\|_2.
\end{equation*}
\end{definition}
Recent work has studied \ELR in the context of loss of `plasticity' in neural networks and scaling gracefully to larger \UTD ratios in \RL \citep{lyle2024normalization,palenicek2025scaling}.

To analyze the second-order properties of the loss landscape, a local quadratic approximation
\begin{align}
    \gL(\vtheta + \delta\vtheta) \approx
    \gL(\vtheta) + 
    \nabla_\vtheta \gL(\vtheta)\delta\vtheta +
    \frac{1}{2}{\delta\vtheta}^\top \nabla^2_{\vtheta}\gL(\vtheta)\delta\vtheta,
\end{align}
illustrates the role of the Hessian $\nabla^2_{\vtheta}\gL(\vtheta)$ in characterizing the curvature of the local loss landscape, which is measured using its eigenvalues.
The $\beta$-smoothness of the objective upper-bounds the largest eigenvalue of the Hessian (Definition \ref{def:beta}), while the ratio of largest to smallest absolute values describes the condition number (Definition \ref{def:condition}).
The larger the condition number, the less effective gradient descent with a fixed learning rate will be due to the large range in curvature per dimension, as illustrated in  Figure \ref{fig:condition_number} \citep{nocedal2006numerical}.
While we use an adaptive learning rate optimizer (Adam,  \citet{kingma2014adam}), whose adaptivity helps overcome issues with ill-conditioning, the loss landscape curvature remains relevant when assessing optimization difficulty.

\begin{definition}
\label{def:beta}
($\beta$-smoothness, \citet{drusvyatskiy2020convex}).
A loss function $\gL(\vtheta)$ is said to be $\beta$-smooth if its gradient is Lipschitz continuous with constant $\beta$, i.e.,
$
    \|\nabla_\vtheta \gL(\vtheta_1) - \nabla_\vtheta \gL(\vtheta_2) \| \leq \beta\, \|\vtheta_1 - \vtheta_2 \|
$
which is equivalent to the largest eigenvalue of its Hessian being bounded by $\beta$, i.e., $\lambda_{\max}(\nabla^2_\vtheta \gL(\vtheta))\leq\beta$.
As such, the $\beta$-smoothness quantifies the maximum curvature of the landscape.
\end{definition}
In our experiments, we will look at the largest eigenvalue as a proxy for the empirical measure of $\beta$.
\begin{definition}
\label{def:condition}
(Condition number, \citet{nocedal2006numerical}).
For a normal matrix $\mH \in \sR^{d \times d}$ with eigenvalues $\lambda_1,  \dots, \lambda_d$, its condition number $\kappa$ is 
$
    \kappa(\mH) = \textstyle\max_i|\lambda_{i}| / \textstyle\min_i|\lambda_i|.
$
As a measure of sensitivity, a low condition number describes a `well-conditioned' matrix, while a high condition number describes an `ill-conditioned' matrix.
\end{definition}
For a Hessian, the condition number is used to analyze and characterize the effectiveness of gradient-based optimization algorithms \citep{nocedal2006numerical}.
Using the insights of the effective learning rate, $\beta$-smoothness, and condition numbers, we now compare the optimization landscapes of different actor-critic architectures in deep \RL.

\begin{figure}[!t]
    \vspace{-1em}
    \includegraphics[width=\linewidth]{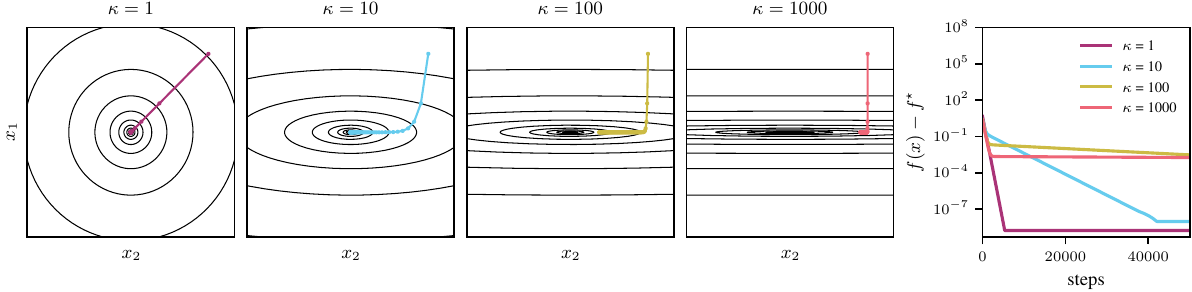}
    \vspace{-2em}
    \caption{When performing gradient-based optimization, the condition number ($\kappa$) of the objective's Hessian significantly impacts convergence. 
    We illustrate this phenomenon with a simple two-dimensional quadratic example.
    As $\kappa$ increases by an order of magnitude, gradient descent converges at a lower rate.
    We believe this phenomenon plays a similar role when learning the critic in deep reinforcement learning, where high condition numbers lead to poor sample efficiency.
    }
    \label{fig:condition_number}
\end{figure}

\section{The Optimization Landscapes of the Bellman error}\label{sec:analysis}

We seek to improve sample efficiency by enhancing the critic's optimization landscape. 
This section applies the optimization insights from Section \ref{sec:prelim} to the Bellman error minimization.
Hessian eigenvalues have previously been used to understand the benefits of batch normalization in supervised learning~\citep{ghorbani2019hessian_eigenspec}.
To our knowledge, this is the first such analysis for deep~\RL.

\subsection{An empirical investigation of critic optimization.}
\label{sec:empirical_investigation}
To quantify the impact of common architectural components on the optimization of the Bellman error, we looked at the eigenvalues of the critic's Hessian while learning the challenging \DMC \texttt{dog-trot} environment, a high-dimensional continuous control task, ensuring the findings are not artifacts of a trivial toy-task.

We systematically compare critic networks with combinations of common architectural components:
\begin{itemize}[noitemsep,topsep=0pt]
    \item Normalization strategies: \textcolor{bn_color}{\BN}, \textcolor{ln_color}{\LN}, None (\textcolor{dense_color}{Dense}).
    \item Weight projection to the unit sphere: \WN (\marksymbol{square}{black}), no \WN (\marksymbol{square*}{black}).
    \item Loss functions: \MSE (\marksymbol{triangle}{black}), \CE (\marksymbol{o}{black}).
\end{itemize}
This results in a total of 12 distinct architectural combinations, 
allowing for a thorough dissection of each component's contribution.
Per architecture, we run 5 random seeds for 1M environment steps and compute the Hessian eigenspectrum at 20 checkpoints throughout training using an efficient \jax~\citep{jax2018github} implementation of the \textit{stochastic Lanczos quadrature} algorithm~\citep{golub1969calculation,lin2016approximating}, adapted from \citet{ghorbani2019hessian_eigenspec}.

\begin{figure}[t]
    \centering
    \includegraphics[width=\linewidth]{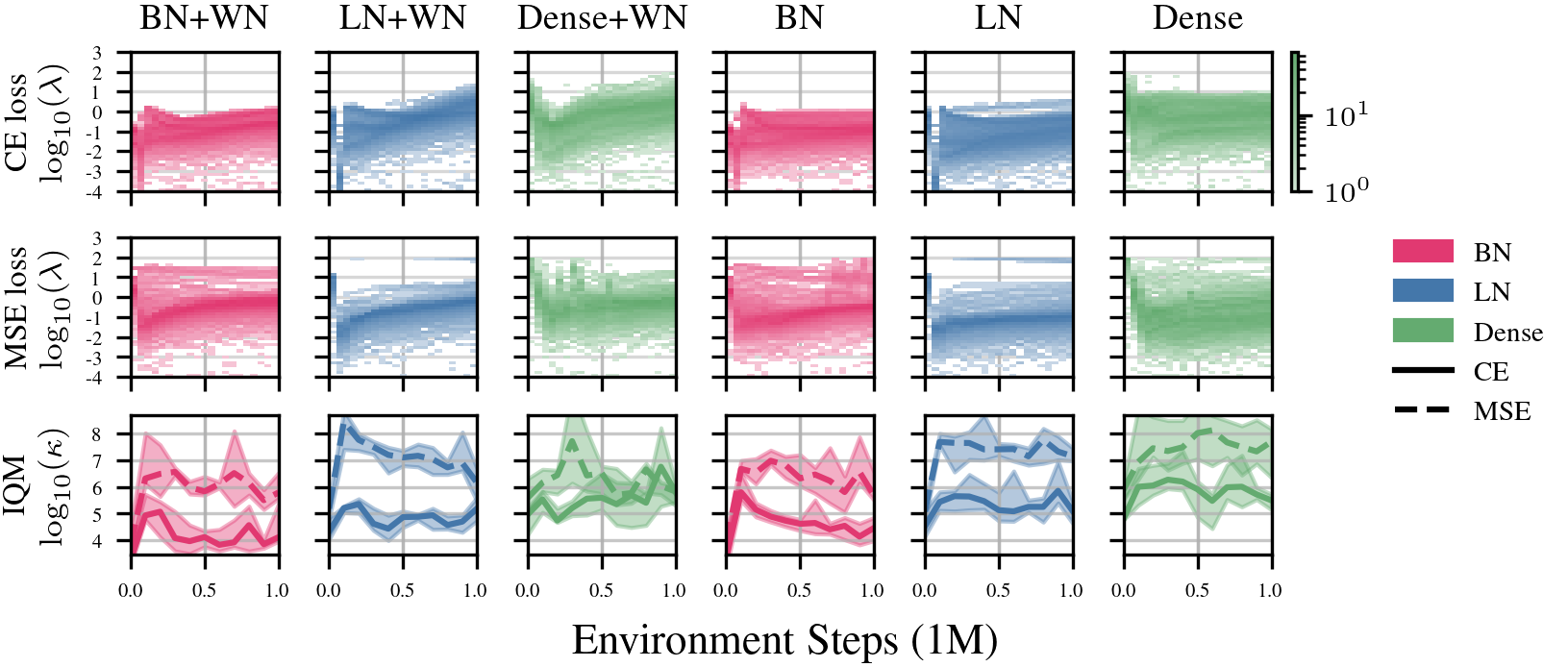}
    \caption{
        Eigenvalues and condition numbers on \texttt{dog-trot} over 5 seeds for different critic architectures during training.
        The top and middle rows show the eigenspectra of the \CE loss and \MSE loss, respectively.
        The columns correspond to different combinations of normalization layers and \WN.
        The bottom row shows the \IQM and 90\% \SBCI of the condition number $\kappa$ aggregated over five seeds for \CE and \MSE losses, respectively.
        Architectures using \BN show more compact and stable eigenspectra over the course of training with no outliers. \LN suffers from large outlier modes and includes overall larger eigenvalues.
        Similarly, the \CE loss significantly improves loss landscape conditioning over an \MSE.
    }
    \label{fig:full_spectra}
\end{figure}

\textbf{Eigenvalue analysis.}
First, we qualitatively analyze how the eigenvalues evolve during training for the different architectures.
\Cref{fig:full_spectra} (top \& middle) reveals striking differences in the curvature of the loss landscape for the different components.
Architectures employing \BN consistently produce more compact and stable eigenspectra throughout training,
with eigenvalues remaining bounded within a moderate range and free of significant outliers.
In stark contrast, \LN  architectures suffer from large, growing outlier eigenvalues, signifying sharp curvature that can destabilize training.
Similarly, the \CE loss significantly improves loss landscape conditioning over an \MSE. 
This is also reflected in the condition numbers \Cref{fig:full_spectra} (bottom), where \BN-based architectures are consistently an order of magnitude smaller and more stable than their non-\BN counterparts.

\textbf{Condition numbers and $\beta$-smoothness.}
To make the relationship between the spectral properties and performance explicit, \Cref{fig:spectrum_summary} presents the data in an aggregated form.
Each point shows aggregated results over 5 seeds, correlating an architecture's \IQM condition number, \IQM $\max(\lambda)$ and \IQM Kurtosis$(\lambda)$, respectively, over the entire course of training, with its sample-efficiency (\IQM return at 1M timesteps).
These plots show a clear and strong trend: architectures with lower condition numbers and smaller maximum eigenvalues achieve higher returns.
This trend provides compelling empirical evidence that, perhaps unsurprisingly, a smoother, better-conditioned optimization landscape is a key driver of performance in deep \RL.
The Kurtosis provides a proxy measure for outliers in the eigenspectrum, where \BN-based architectures consistently show lower Kurtosis than their \LN-based counterparts.
Furthermore, the results show that \BN, \WN, and a categorical \CE loss each independently improve the landscape's conditioning, and when combined, their synergistic effect yields the best-conditioned landscape and the highest performance.
\newpage

\begin{figure}
    \centering
    \includegraphics[width=\linewidth]{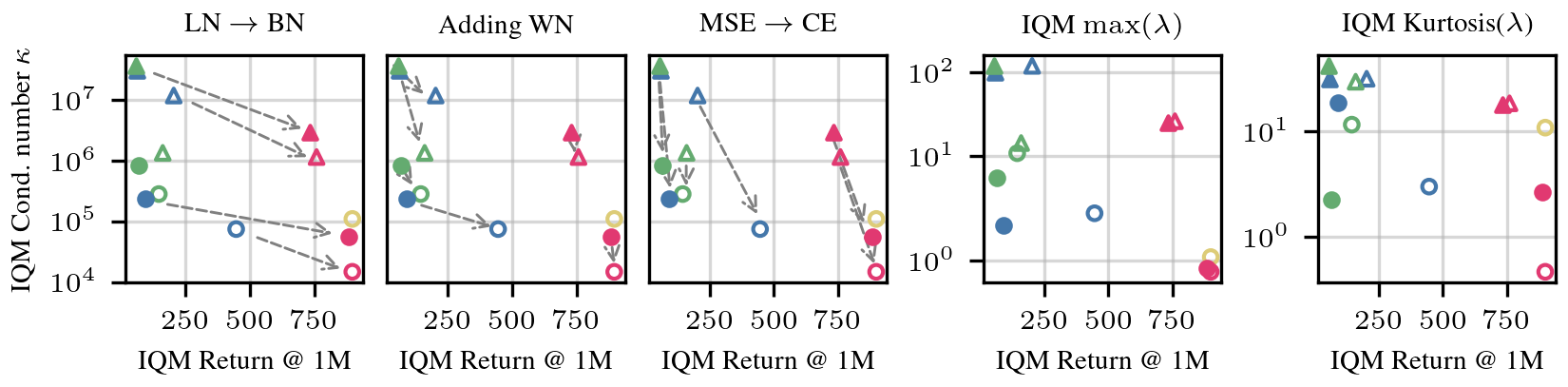}
    \vspace{-2em}
    \caption{
    The condition numbers and maximum eigenvalues against the return at 1M steps on \DMC \texttt{dog-trot}.
    Normalization strategies are color-coded \textcolor{bn_color}{\BN}, \textcolor{ln_color}{\LN}, \textcolor{dense_color}{Dense}.
    Use of \WN = empty shape~\protect\marksymbol{square}{black}, whereas no \WN is represented by a filled shape~\protect\marksymbol{square*}{black}.
    \MSE~loss~=~\protect\marksymbol{triangle}{black} and \CE~=~\protect\marksymbol{o}{black}.
    Architectures with lower condition numbers and lower maximum eigenvalues tend to have better final returns.
    Also, \BN, \WN, and the categorical \CE loss each improve the loss conditioning independently (columns 1-3).
    Combined, they result in the best conditioning and best performance~\protect\marksymbol{o}{blue}.
    For reference, we include \SIMBA~\protect\marksymbol{o}{orange} a strong baseline with a similarly low condition number.
    }
    \label{fig:spectrum_summary}
\end{figure}

\subsection{Why does cross-entropy outperform the squared error?}
\label{sec:ce_analysis}
Distributional \RL and \CFiftyOne were proposed to perform distributional regression of the returns, beyond predicting only the average value \citep{bellemare2023distributional}.
In this section, we motivate distributional losses from the optimization perspective \citep{imani2018improving} to explain the dramatic difference in condition numbers between \CE and \MSE Bellman errors in Section \ref{sec:empirical_investigation}.
We show that the \CE loss has desirable properties for optimization over the \MSE.
Firstly, Propositions \ref{prop:mse_grad} and \ref{prop:ce_grad} show that the gradient norm for the loss with respect to the predictions can only be bounded for the \CE loss.
\begin{proposition}
\label{prop:mse_grad}
The loss, $l(\vy, \hat{\vy}) = \frac{1}{2}||\vy -\hat{\vy}||_2^2$ has unbounded gradients w.r.t. $\hat{\vy}$,
\begin{align}
    ||\nabla_{\hat{\vy}} l(\vy, \hat{\vy})||_2 = ||\vy - \hat{\vy}||_2 \leq \infty,
    \quad
    \hat{\vy} = \vf_\vtheta(\vx).
\end{align}
\end{proposition}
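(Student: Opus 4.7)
The plan is to prove this by direct computation, since the proposition is essentially a one-line calculus exercise whose role is to set up a contrast with the bounded-gradient property of the cross-entropy loss proved in the next proposition.

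First, I would compute the gradient component-wise. Writing $l(\vy,\hat{\vy}) = \tfrac{1}{2}\sum_i (y_i - \hat{y}_i)^2$, differentiating with respect to each $\hat{y}_i$ gives $\partial l/\partial \hat{y}_i = -(y_i - \hat{y}_i) = \hat{y}_i - y_i$. Stacking components yields $\nabla_{\hat{\vy}} l(\vy,\hat{\vy}) = \hat{\vy} - \vy$. Taking Euclidean norms and using the symmetry $\|\hat{\vy}-\vy\|_2 = \|\vy-\hat{\vy}\|_2$ then establishes the claimed identity $\|\nabla_{\hat{\vy}} l(\vy,\hat{\vy})\|_2 = \|\vy - \hat{\vy}\|_2$.

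Second, I would justify the "$\leq \infty$" (unboundedness) part by noting that the prediction $\hat{\vy} = \vf_\vtheta(\vx) \in \R^d$ is unconstrained by construction, and for the Bellman error the target $\vy$ is itself a bootstrapped estimate with no architectural bound. Hence for any $M > 0$ one can exhibit admissible $(\vy,\hat{\vy})$ with $\|\vy - \hat{\vy}\|_2 > M$, so the supremum of $\|\nabla_{\hat{\vy}} l\|_2$ over admissible inputs is $+\infty$ and no uniform Lipschitz constant of $l$ in $\hat{\vy}$ exists.

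There is no genuine obstacle: the only subtlety worth stating explicitly is that unboundedness here means "not uniformly bounded over the admissible range of predictions and targets". I would emphasize this in one sentence to motivate the relevance for the effective learning rate story, since under the \ELR analysis (Definition \ref{def:elr}), an unbounded $\|\nabla_{\hat{\vy}} l\|_2$ can propagate into unbounded gradients $\|\vg(\vtheta)\|_2$ at the parameters, making the \ELR volatile under non-stationary Bellman targets — which is precisely the behaviour the subsequent cross-entropy proposition is designed to rule out.
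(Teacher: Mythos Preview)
Your proposal is correct and matches the paper's approach: the paper's proof consists solely of the phrase ``Standard calculus,'' which is precisely the component-wise differentiation you spell out. Your added explanation of what ``unbounded'' means and the link to the \ELR discussion is more detailed than anything the paper provides, but entirely in the same spirit.
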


\begin{proposition}
\label{prop:ce_grad}
The loss, $l(\vt, \hat{\vy}){\,=\,}{-}\sum_{i=1}^d t_{i}\log \hat{t}_{i},\,\hat{\vt}{\,=\,}\textrm{Softmax}(\hat{\vy})$ has \underline{bounded} gradients w.r.t. $\hat{\vy}$,
\begin{align}
    ||\nabla_{\hat{\vy}} l(\vt, \hat{\vy})||_2 = ||\vt - \textrm{Softmax}(\hat{\vy})||_2 \leq \sqrt{2},
    \quad
    {\hat{\vy} = \vf_\vtheta(\vx))}.
\end{align}
\end{proposition}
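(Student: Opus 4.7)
The plan is to prove Proposition \ref{prop:ce_grad} in two steps: first derive the standard softmax cross-entropy gradient identity, then bound the resulting vector using the simplex geometry.

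First, I would compute the gradient $\nabla_{\hat{\vy}} l(\vt, \hat{\vy})$ explicitly. Writing $l(\vt, \hat{\vy}) = -\sum_{i=1}^d t_i \log \hat{t}_i$ with $\hat{t}_i = e^{\hat{y}_i}/\sum_{k} e^{\hat{y}_k}$, a short calculation using $\partial \hat{t}_i/\partial \hat{y}_j = \hat{t}_i(\delta_{ij} - \hat{t}_j)$ and the assumption $\sum_i t_i = 1$ gives the well-known identity
\begin{equation*}
    \nabla_{\hat{\vy}} l(\vt, \hat{\vy}) = \mathrm{Softmax}(\hat{\vy}) - \vt.
\end{equation*}
Taking the $L^2$ norm establishes the first equality in the proposition statement.

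Second, I would bound $\|\vt - \mathrm{Softmax}(\hat{\vy})\|_2$ by exploiting that both vectors lie in the probability simplex $\Delta^{d-1} = \{\vp \in \R^d : p_i \geq 0, \sum_i p_i = 1\}$. For any $\vp, \vq \in \Delta^{d-1}$, since $p_i, q_i \in [0,1]$ we have $p_i^2 \leq p_i$ and $q_i^2 \leq q_i$, so
\begin{equation*}
    \|\vp - \vq\|_2^2 = \sum_{i=1}^d (p_i - q_i)^2 \leq \sum_{i=1}^d (p_i^2 + q_i^2) \leq \sum_{i=1}^d p_i + \sum_{i=1}^d q_i = 2,
\end{equation*}
where the first inequality drops the non-positive cross term $-2p_i q_i$. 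Taking square roots yields the claimed bound $\sqrt{2}$, with equality approached when $\vt$ and $\mathrm{Softmax}(\hat{\vy})$ concentrate on distinct vertices of the simplex.

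There is no real obstacle here; the result is elementary once the softmax-cross-entropy gradient identity is recognized. The only subtle point worth flagging in the write-up is the requirement that the target $\vt$ be a probability vector (which holds by construction in \CFiftyOne-style distributional critics via a projected Bellman target), and that $\mathrm{Softmax}$ always returns a point in the open simplex, so the bound $\sqrt{2}$ is tight but not attained. Contrasting this bounded gradient with Proposition \ref{prop:mse_grad}, whose right-hand side scales with $\|\vy - \hat{\vy}\|_2$ and is therefore unbounded in the magnitude of the TD target, would conclude the exposition and motivate the conditioning improvements observed empirically in Section \ref{sec:empirical_investigation}.
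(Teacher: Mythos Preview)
Your proposal is correct and follows essentially the same approach as the paper, which simply states ``standard calculus, and then using the difference of two categorical probability vectors (on a simplex) to bound the largest squared error as $2$.'' Your write-up just makes both steps explicit, including the elementary simplex bound $\|\vp-\vq\|_2^2 \leq \sum_i p_i + \sum_i q_i = 2$.
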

Combining Proposition \ref{prop:ce_grad} with weight normalization and a Lipschitz assumption, we can upper bound the effective gradient update (Definition \ref{def:elr}) in Theorem \ref{th:ce_elr}.
\begin{theorem}
\label{th:ce_elr}
For the cross entropy-loss $l$ and learning rate $\eta\geq 0$, for a scale-invariant function approximator $\vf_\vtheta$ which is $L_\vf$ Lipschitz continuous in the L2 norm with respect to $\vtheta$ with fixed parameter norm $||\vtheta||_2 = C$, the effective gradient update can be upper bounded as
\begin{align}
    \eta{||\vtheta||_2}^{-1}\nabla_\vtheta \l(\vt, \vf_\vtheta(\vx))
    \leq \eta\,C^{-1}\sqrt{2} L_\vf.
\end{align}
\end{theorem}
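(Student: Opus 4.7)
The plan is to combine Proposition \ref{prop:ce_grad} with the assumed Lipschitz continuity of $\vf_\vtheta$ via the chain rule. I first interpret the stated inequality in the L2 norm, since the left-hand side is vector-valued; that is, I treat the claim as
\begin{align*}
\eta\,||\vtheta||_2^{-1}\,||\nabla_\vtheta l(\vt, \vf_\vtheta(\vx))||_2 \;\leq\; \eta\,C^{-1}\sqrt{2}\,L_\vf.
\end{align*}
From here, everything reduces to bounding $||\nabla_\vtheta l||_2$ by $\sqrt{2}\,L_\vf$ and then dividing by $||\vtheta||_2 = C$.

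The first step is to apply the chain rule, writing $\nabla_\vtheta l(\vt, \vf_\vtheta(\vx)) = \mJ_\vtheta^{\!\top}\,\nabla_{\hat{\vy}} l(\vt,\hat{\vy})\big|_{\hat{\vy}=\vf_\vtheta(\vx)}$, where $\mJ_\vtheta = \partial \vf_\vtheta(\vx)/\partial \vtheta$ is the Jacobian with respect to the parameters. The second step is to translate the $L_\vf$-Lipschitz assumption into an operator-norm bound on $\mJ_\vtheta$: for (almost everywhere) differentiable $\vf_\vtheta$, Lipschitz continuity in $\vtheta$ with constant $L_\vf$ is equivalent to $||\mJ_\vtheta||_{\mathrm{op}}\leq L_\vf$. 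The third step is to invoke Proposition \ref{prop:ce_grad}, which gives $||\nabla_{\hat{\vy}} l(\vt,\hat{\vy})||_2 \leq \sqrt{2}$ directly from the fact that both $\vt$ and $\textrm{Softmax}(\hat{\vy})$ lie in the probability simplex. Combining these via submultiplicativity of the operator norm yields $||\nabla_\vtheta l||_2 \leq ||\mJ_\vtheta||_{\mathrm{op}}\,||\nabla_{\hat{\vy}} l||_2 \leq L_\vf\sqrt{2}$, after which dividing by $||\vtheta||_2 = C$ and multiplying by $\eta \geq 0$ completes the bound.

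The argument is essentially a two-line chain-rule calculation once the ingredients are in place, so the main obstacle is less mathematical than expository: one must resolve the vector/scalar type mismatch in the theorem statement by fixing the L2 norm interpretation, and one must be explicit that ``Lipschitz continuity in $\vtheta$'' licenses the operator-norm bound on $\mJ_\vtheta$ that is used to close the chain rule. If $\vf_\vtheta$ is not assumed differentiable, the same conclusion can be obtained without a Jacobian by writing the finite-difference version of the gradient step and applying the Lipschitz inequality together with the bounded loss-gradient directly, but the cleanest presentation is the differentiable one sketched above.
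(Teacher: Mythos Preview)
Your proposal is correct and follows essentially the same route as the paper: the paper's proof invokes Lemma~\ref{lem:chain_lipschitz} (chain rule plus the Lipschitz bound on the Jacobian), Proposition~\ref{prop:ce_grad} (the $\sqrt{2}$ bound on the loss gradient), and the fixed parameter norm $||\vtheta||_2=C$, which is exactly the decomposition you carry out. Your explicit observation that $L_\vf$-Lipschitz continuity in $\vtheta$ is equivalent to $||\mJ_\vtheta||_{\mathrm{op}}\le L_\vf$ is the content of the paper's Lemma~\ref{lem:chain_lipschitz}, and your remark on interpreting the vector inequality in the L2 norm is a reasonable clarification of the statement.
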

To analyze second-order properties, we must assume that we can bound the eigenvalues of the function approximator's Hessian so that weight decay can ensure the Hessian of the objective is positive definite and the smallest eigenvalue is greater than zero.
\begin{assumption}
\label{ass:hess}
We assume eigenvalue bounds for the function approximator Hessian (per output),
\begin{align*}
    \textstyle 0 \leq |\lambda_1^f| \leq |\sigma_i(\nabla^2_\vtheta f_\vtheta(\vx))| \leq |\lambda_m^f| < \infty
    \quad
    \forall\; i\in[0, m],\vx \in \gX, \vtheta \in \Theta.
\end{align*}
\end{assumption}
\begin{proposition}
\label{prop:mse_hess}
Given Assumption \ref{ass:hess}, the eigenvalues of the Hessian of the mean squared error loss with weight decay $\mu^2$, $\mu{\,\geq\,}0$ are unbounded and the condition number \underline{cannot} be upper bounded.
\end{proposition}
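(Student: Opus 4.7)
The plan is to compute the Hessian of the regularized MSE objective directly using the Gauss--Newton decomposition, isolate a residual-curvature term whose spectral contribution is controlled by the Bellman residual $y - f_\vtheta(\vx)$, and leverage Proposition~\ref{prop:mse_grad} (which shows this residual has no finite upper bound) to exhibit Hessian eigenvalues that grow without bound. Concretely, I would differentiate $\gL(\vtheta) = \tfrac{1}{2}(y - f_\vtheta(\vx))^2 + \tfrac{\mu^2}{2}\|\vtheta\|_2^2$ twice to obtain
$\nabla^2_\vtheta \gL(\vtheta) = \nabla_\vtheta f_\vtheta(\vx)\,\nabla_\vtheta f_\vtheta(\vx)^\top - (y - f_\vtheta(\vx))\,\nabla^2_\vtheta f_\vtheta(\vx) + \mu^2 \mI.$
The first summand is rank-one and positive semidefinite, the third is a uniform shift from weight decay, and the middle residual-curvature term is the only component whose magnitude depends on the prediction error.

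For the first claim, I would take a unit-norm eigenvector $\vv$ of $\nabla^2_\vtheta f_\vtheta(\vx)$ associated with an eigenvalue $\lambda$ of largest absolute value (Assumption~\ref{ass:hess} permits $|\lambda_m^f| > 0$ for any nonlinear critic) and evaluate the Rayleigh quotient
$\vv^\top \nabla^2_\vtheta \gL(\vtheta)\,\vv = (\vv^\top \nabla_\vtheta f_\vtheta(\vx))^2 - (y - f_\vtheta(\vx))\,\lambda + \mu^2.$
By Proposition~\ref{prop:mse_grad}, $|y - f_\vtheta(\vx)|$ has no finite upper bound across $(\vx, y, \vtheta)$, so choosing $y$ with appropriate sign makes this quotient diverge in absolute value. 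The variational characterization of eigenvalues then shows $\max_i|\lambda_i(\nabla^2_\vtheta \gL)|$ is unbounded. For the condition number I would complement this with an orthogonal direction: any unit vector $\vu \in \mathrm{range}(\nabla_\vtheta f_\vtheta(\vx))^\perp \cap \ker(\nabla^2_\vtheta f_\vtheta(\vx))$ (nonempty whenever $\dim(\Theta)$ exceeds $1 + \mathrm{rank}(\nabla^2_\vtheta f_\vtheta(\vx))$, i.e., in the overparameterized regime typical of deep RL critics) satisfies $\nabla^2_\vtheta \gL\, \vu = \mu^2 \vu$, so $\mu^2$ is itself an eigenvalue. Hence $\kappa \geq (\text{unbounded})/\mu^2$ for $\mu > 0$, and for $\mu = 0$ the Hessian is singular, making the condition number infinite.

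The main obstacle will be guaranteeing that the exhibited direction $\vv$ genuinely realizes the unbounded Rayleigh quotient: because the residual-curvature contribution is sign-indefinite, a careless pairing of $(y, \lambda)$ could be partially cancelled by the nonnegative Gauss--Newton and weight-decay terms. This is resolved by choosing the sign of $(y - f_\vtheta(\vx))\lambda$ so that $-(y - f_\vtheta(\vx))\lambda$ grows in the same direction as (or dominates in absolute value) the other finite nonnegative terms, making $|\vv^\top \nabla^2_\vtheta \gL\,\vv|$ arbitrarily large regardless of them. Verifying nonemptiness of the orthogonal subspace used for the small eigenvalue is a straightforward dimension-counting argument that holds trivially for the wide critic networks used throughout.
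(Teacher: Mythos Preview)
Your proposal is correct and rests on the same Gauss--Newton decomposition of the Hessian and the same appeal to Proposition~\ref{prop:mse_grad} that the paper uses. The difference is in how the spectral conclusion is extracted: the paper bounds the eigenvalues of each summand separately and combines them via Weyl's inequality (Lemma~\ref{lem:weyl}), arguing that the resulting interval $[\mu^2 - 2m|g|_{\max}\lambda_m^f,\; \mu^2 + 2m|g|_{\max}\lambda_m^f]$ is unbounded because $|g|_{\max}$ is, whereas you work constructively with Rayleigh quotients along an explicit eigenvector of $\nabla^2_\vtheta f_\vtheta(\vx)$. Your route is slightly more elementary and, in one respect, tighter: Weyl's bounds diverging does not by itself force the actual eigenvalues to diverge, while your Rayleigh-quotient calculation does. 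For the small-eigenvalue side, the paper again leans on Weyl to argue the minimum can hit zero, while you exhibit a concrete direction in $(\mathrm{span}\,\nabla_\vtheta f_\vtheta)^\perp \cap \ker(\nabla^2_\vtheta f_\vtheta)$ with eigenvalue exactly $\mu^2$; this is cleaner but costs you an overparameterization assumption the paper does not state explicitly. Both arguments land in the same place, and neither introduces machinery the other lacks.
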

\begin{proposition}
\label{prop:ce_hess}
Given Assumption \ref{ass:hess}, the eigenvalues of the Hessian of the cross-entropy loss with weight decay $\mu^2$ have an upper-bounded condition number
\begin{align*}
    \textstyle \kappa\left(\nabla^2_\vtheta \gL\right) \leq 
    (4\lambda^f_m + L_\vf^2 + \epsilon)/\epsilon,
    \quad
    \epsilon \geq 0,
\end{align*}
when $\mu^2 = 2 \lambda_m^f + \epsilon$, which provides a finite upper bound when $\epsilon > 0$.
\end{proposition}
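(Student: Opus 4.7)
The plan is to decompose the Hessian of the regularized cross-entropy objective $\gL(\vtheta) = l(\vt, \vf_\vtheta(\vx)) + \tfrac{\mu^2}{2}\|\vtheta\|_2^2$ into three additive pieces whose spectra can be bounded separately, and then assemble the condition-number bound via Weyl's inequality. Writing $\hat{\vy} = \vf_\vtheta(\vx)$, $J_\vf = \nabla_\vtheta \vf_\vtheta(\vx)$, $\vg = \nabla_{\hat{\vy}} l$, and $H_l = \nabla^2_{\hat{\vy}} l$, the chain rule for second derivatives gives
\begin{equation*}
\nabla^2_\vtheta \gL
= \underbrace{J_\vf^\top H_l\, J_\vf}_{A}
+ \underbrace{\textstyle\sum_{i=1}^d g_i\, \nabla^2_\vtheta f_i}_{B}
+ \mu^2 I.
\end{equation*}

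Next I would bound each summand. The softmax cross-entropy Hessian $H_l = \mathrm{diag}(\hat{\vt}) - \hat{\vt}\hat{\vt}^\top$ is positive semidefinite with $\|H_l\|_{\mathrm{op}} \leq 1$ (via Gershgorin, or from $\vv^\top H_l \vv \leq \sum_i \hat{t}_i v_i^2 \leq \|\vv\|_2^2$), and the Lipschitz hypothesis gives $\|J_\vf\|_{\mathrm{op}} \leq L_\vf$; combining yields $0 \leq \lambda_i(A) \leq L_\vf^2$. For term $B$, Proposition \ref{prop:ce_grad} identifies $\vg = \hat{\vt} - \vt$, and since both vectors live on the probability simplex we have $\|\vg\|_1 \leq 2$. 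Together with Assumption \ref{ass:hess}, the triangle inequality gives $\|B\|_{\mathrm{op}} \leq \|\vg\|_1\,\lambda_m^f \leq 2\lambda_m^f$, hence $-2\lambda_m^f \leq \lambda_i(B) \leq 2\lambda_m^f$. The weight-decay term contributes exactly $\mu^2$ to every eigenvalue.

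Weyl's inequality then yields $\lambda_{\max}(\nabla^2_\vtheta \gL) \leq L_\vf^2 + 2\lambda_m^f + \mu^2$ and $\lambda_{\min}(\nabla^2_\vtheta \gL) \geq \mu^2 - 2\lambda_m^f$. Substituting the prescribed $\mu^2 = 2\lambda_m^f + \epsilon$ forces the lower bound to equal $\epsilon$ (strictly positive for any $\epsilon > 0$) and turns the upper bound into $L_\vf^2 + 4\lambda_m^f + \epsilon$; the ratio of these is precisely the claimed bound. The contrast with Proposition \ref{prop:mse_hess} is instructive: for the squared loss, $\vg = \hat{\vy} - \vt$ is unbounded, so term $B$ admits no uniform operator bound and no finite weight decay can pin the smallest eigenvalue above zero.

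The main obstacle is ensuring the two a~priori bounds $\|H_l\|_{\mathrm{op}} \leq 1$ and $\|\vg\|_1 \leq 2$ are \emph{uniform} in $\vtheta$ and $\vx$ — both follow precisely because the softmax image lies on a bounded simplex, which is the structural advantage the cross-entropy loss enjoys over the squared error. Everything else is a routine application of the chain rule and spectral (Weyl) inequalities, together with Assumption \ref{ass:hess} and the Lipschitz hypothesis on $\vf_\vtheta$.
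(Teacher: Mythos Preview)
Your proposal is correct and follows essentially the same route as the paper: the chain-rule Hessian decomposition into the Gauss--Newton term, the gradient-weighted function Hessian, and the weight-decay shift, followed by Weyl's inequality with the bounds $[0,L_\vf^2]$, $[-2\lambda_m^f,2\lambda_m^f]$, and $\mu^2$ respectively. Your justification of the Gauss--Newton bound via $\|H_l\|_{\mathrm{op}}\le 1$ and $\|J_\vf\|_{\mathrm{op}}\le L_\vf$, and of the second term via $\|\vg\|_1\le 2$, is in fact a bit cleaner than the paper's trace/Frobenius argument, but the logic and resulting bounds are identical.
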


For proofs see \Cref{app:theoretical_anaylsis}.
In practice, we do not require weight decay to attain good performance, and as a result, the Hessian was observed to not always be positive definite.
Nonetheless, these results provide a formal intuition for our empirical result that \CE losses consistently report smaller condition numbers.
Our analysis directly motivates the design of our \XQC algorithm, presented next.

\section{XQC: A Simple \& Well-conditioned Actor-Critic Architecture}\label{sec:xqc}

\begin{wrapfigure}[24]{r}{0.25\textwidth}
  \vspace{-2.2em}
  \begin{center}
  \includegraphics[width=0.25\textwidth]{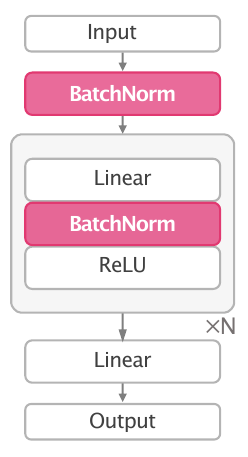}
  \end{center}
  \caption{The \XQC network architecture consists of only three standard components: Linear, \BN, and \relu for a total of 4 hidden layers.}
  \label{fig:architecture}
\end{wrapfigure}

This section presents our novel \XQC algorithm, a direct conclusion of our optimization analysis in \Cref{sec:analysis}.
It is a simple, yet powerful architecture with the purpose of improving the loss landscapes optimization behaviours, that extends the popular \SAC algorithm.
\XQC's critic architecture is motivated by a central principle: combining components that synergistically improve optimization dynamics.
We provide a complete list of hyperparameters in \Cref{sec:hyperparameters}.

\textbf{Batch normalization.}
\XQC uses \BN layers directly on the network input and after each linear layer~(\Cref{fig:architecture}).
Following \citet{bhatt2024crossq}, we implement a joined forward pass to automatically calculate the \BN running statistics on the joined $(\vs,\va)$ and $(\vs',\va')$ distribution~\citep{bhatt2024crossq}, to successfully integrate \BN in the \RL loop.
In contrast to \citet{bhatt2024crossq}, we find that switching the order of normalization and \relu-activation leads to better performance. It has the added benefit that in this order, \BN's scale invariance is preserved for any activation function, as opposed to homogeneous ones only.
\XQC uses four hidden layers with $512$ neurons each.

\textbf{Cross-entropy Bellman loss.}
We use a \CFiftyOne-style categorical critic with $101$ atoms and a \CE loss~\citep{bellemare2017c51}.
We use standard reward normalization based on running statistics of the standard deviation of the return $R$ to effectively bound the $Q$ values to the support of our categorical critic
$
    \textstyle\hat{r}_t = r_t/\sigma(R)
$~\citep{engstrom2020implementation}.
Next to improving the loss landscape conditioning, another desirable property of the categorical \CE loss is to keep gradient norms bounded and thereby help keep the \ELR constant.

\textbf{Weight normalization.}
Enabled by \BN's scale-invariance property, we project the weights of each dense layer to the unit sphere after each gradient step. 
This normalization keeps the denominator of the \ELR constant, so it becomes practically constant when using the \CE loss (Figure \ref{fig:plasticity_metrics}), so \XQC maintains good plasticity.
With a constant \ELR, we can now leverage a learning rate schedule for Adam~\citep{kingma2014adam} as previously suggested \citep{lyle2024normalization,lee2025hyperspherical}.

\textbf{Vision encoder.}
For experiments on the \DMC vision-based environments, we use the standard \DRQ~\citep{yaratsmastering} image encoder. 
For a fair and direct comparison to \DRQ, we use its standard, unmodified vision encoder, which consists of convolutional layers alternated with \relu activations, followed by a linear layer, \LN, and a \tanh activation.
Our architectural modifications are confined to the subsequent \textsc{mlp} layers of the actor and critic.

\section{Experiments}\label{sec:experiemnts}
This section empirically validates our central hypothesis: that the synergistic combination of \BN, \WN, and a categorical \CE critic loss, designed to create a well-conditioned optimization landscape, directly translates into state-of-the-art sample efficiency and training stability.
We structure our experiments to first demonstrate \XQC's superior performance against strong baselines~(\Cref{sec:exp_sample_efficiency}),
then dissect the underlying mechanics through analysis of common plasticity metrics and the \ELR~(\Cref{sec:exp_plasticity}),
Finally, in \Cref{sec:exp_param_efficiency} we analyze 
computational efficiency, scaling properties,
and present a thorough ablation study~(\Cref{sec:ablations}) confirms the necessity of each of \XQC's architectural components.

\textbf{Evaluation metrics.}
For the main experiments we run 10 random seeds per environment for 1 million environment steps and for ablations 5 seeds, unless otherwise noted.
For statistically rigorous evaluations, we report the \IQM and 90\% \SBCI for all aggregate scores, following the recommended best practices of \citet{agarwal2021iqm}.
To aggregate \IQM return curves over multiple environments and benchmarks, each score needs to be normalized.
We follow standard practice, details in \Cref{app:aggregation_details}.
In aggregated bar charts, we present \AUC of the \IQM normalized return curve.
The \AUC captures both training speed as well as absolute performance simultaneously.
As such, it discriminates between two algorithms, which converge to the same performance but at different speeds, as opposed to the \IQM of the final performance.

\textbf{Benchmarks.}
To validate \XQC's effectiveness, we conduct comprehensive experiments across \NEnvsTotal continuous control tasks spanning five popular benchmark suites.
Our evaluation covers \NEnvsVision vision-based tasks from \DMC, plus and additional \NEnvsProprio proprioceptive tasks from \HB~\citep{sferrazza2024humanoidbench} (\NEnvsHB tasks), \DMC~\citep{tassa2018deepmindcontrolsuite} (\NEnvsDMC tasks),
\Myo~\citep{caggiano2022myosuite} (\NEnvsMYO tasks),
and \mujoco~\citep{todorov2012mujoco} (\NEnvsMUJOCO tasks). 
Our extensive evaluation shows \XQC's generality, using a single set of hyperparameters across all tasks. 

\textbf{Baselines.}
We compare to several strong, recent model-free 
baselines: 
\SIMBA~\citep{lee2025hyperspherical}, 
\BRO~\citep{nauman2024bigger}, 
\MrQ~\citep{fujimoto2025mrq}, 
\BRC~\citep{nauman2025brc}, 
\XQWN~\citep{palenicek2025scaling},
\SAC~\citep{haarnoja2018sac},
When available, we use the respective authors' evaluation results; otherwise, we run experiments using their official open-source implementations. Full details on baseline results are provided in \Cref{app:baselines}.

\begin{figure}[t!]
    \centering
    \includegraphics[width=\linewidth]{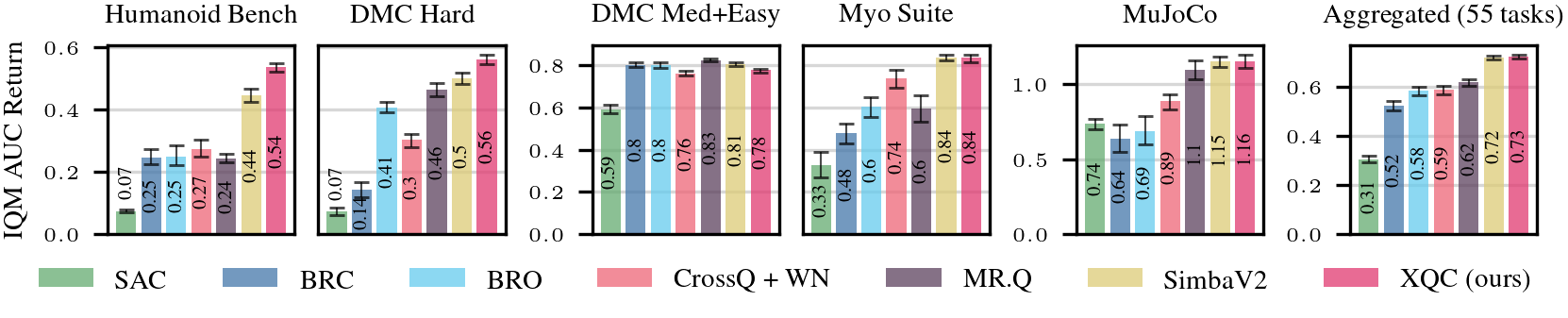}
    \vspace{-2.2em}
    \caption{
    \textbf{\XQC achieves state-of-the-art sample efficiency across \NEnvsProprio proprioceptive continuous control tasks.}
    We report the \IQM \AUC of normalized returns.
    Error bars denote 90\% \SBCI{}s.
    The right column shows total aggregated performance across the benchmarks (\NEnvsProprio tasks).
    \XQC matches or outperforms strong baselines, especially on the hardest \DMC and \HB tasks,
    while using a simpler and smaller architecture (see \Cref{sec:exp_param_efficiency}).
    }
    \label{fig:sample_efficiency_proprio}
\end{figure}

\subsection{Sample efficiency results}\label{sec:exp_sample_efficiency}
We start our experiments by investigating the training performance in terms of sample efficiency and comparing it to state-of-the-art baselines. 
All of these results use 10 seeds per environment.
First, we present the proprioception-based results and then the vision-based tasks.

\textbf{Reinforcement learning from proprioception.}
\Cref{fig:sample_efficiency_proprio} shows that \XQC matches our outperforms strong baselines \SIMBA, \MrQ, \BRO and \XQWN on all 4 benchmarks.
The rightmost column shows that on average \XQC performs as well \SIMBA while using significantly less network parameters and a substantially simpler architecture~\Cref{fig:architecture}.
Notably \XQC shows exceptional performance on the most complex tasks \HB and \DMC-\texttt{hard}.
These enm induce notoriously difficult and ill-conditioned optimization landscapes. \XQC's superior performance and learning speed suggest that its well-conditioned critic---characterized by a stable \ELR and bounded gradients as shown in \Cref{sec:exp_plasticity}---is fundamentally better equipped to handle the non-stationary targets and bootstrapping errors inherent in these challenging domains.

\begin{figure}[t!]
    \centering
    \includegraphics[width=\linewidth]{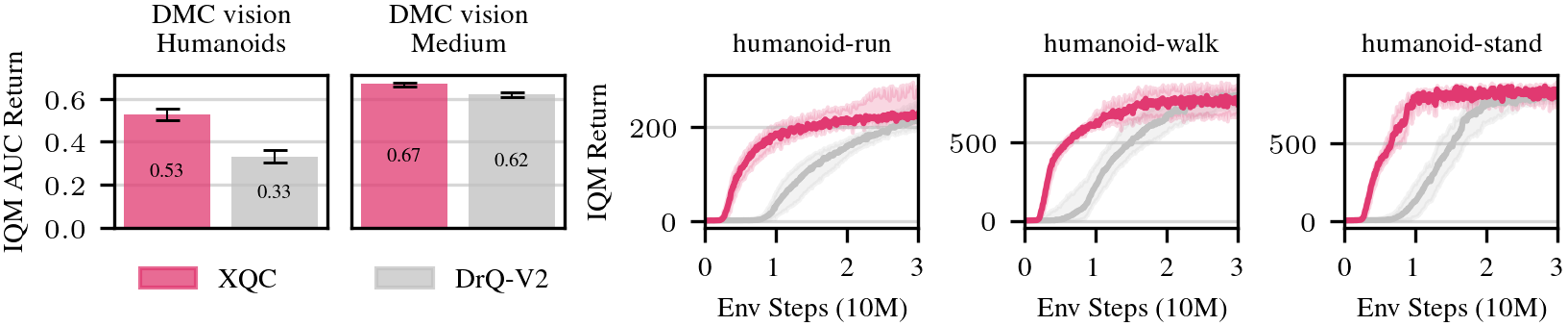}
    \vspace{-2em}
    \caption{
    \textbf{\XQC improves sample efficiency on \NEnvsVision vision-based \DMC tasks.}
    The left two columns show aggregated \IQM \AUC, demonstrating a significant performance advantage over the strong \DRQ baseline, particularly on the difficult \humanoid tasks.
    The right three columns show full training curves for the three \humanoid environments (\IQM over 10 seeds), highlighting \XQC's significantly better sample-efficiency.
    For these experiments, \XQC uses the standard \DRQ encoder and hyperparameters, isolating performance gains to our proposed well-conditioned critic architecture.
    }
    \label{fig:sample_efficiency_vision}
\end{figure}

\textbf{Reinforcement learning from pixels.}
On the vision-based \DMC environments, we compare \XQC to \DRQ. For these results, we re-implemented \XQC in the official \DRQ codebase. We used the \DRQ encoder and the same hyperparameters as the original \DRQ to make the comparison as fair as possible.
\Cref{fig:sample_efficiency_vision} shows that on most tasks, \XQC outperforms or at least matches \DRQ performance.
This is most pronounced on the much more challenging \texttt{humanoid} tasks.
Learning the vision-encoder from scratch requires a large number of samples in itself. We hypothesize that this is why the performance increase of \XQC is smaller on the easier tasks and much more pronounced on the humanoids, which have a $10\times$ overall runtime.

\subsection{Plasticity analysis}\label{sec:exp_plasticity}
Analysing the improvement of common plasticity metric confirms the effectiveness of \XQC{}s design principles. \Cref{fig:plasticity_metrics} presents plasticity metrics aggregated over all \NEnvsProprio proprioceptive tasks.
\XQC w/o \WN's growing parameter norms decrease the \ELR towards zero over time, reconfirming the findings of~\citet{lyle2024normalization} and~\citet{palenicek2025scaling}.
We notice that the \ELR appears directly coupled to the \textit{gradient norm}, for all architectures employing \WN.
While \XQC \MSE controls the parameter norm, its gradients are unbounded and heavily influenced by outliers; consequently, its gradient norm and \ELR grow over the course of training by about one order of magnitude (requiring a second \texttt{y-axis} to compare).
\XQC's \CE critic loss removes this disturbance, directly reflected in remarkably stable gradient norm and \ELR throughout the course of training, which are many orders of magnitude smaller.
We show per benchmark plasticity metrics in~\Cref{app:plasticity}.

\begin{figure}[t!]
    \centering
    \includegraphics[width=\linewidth]{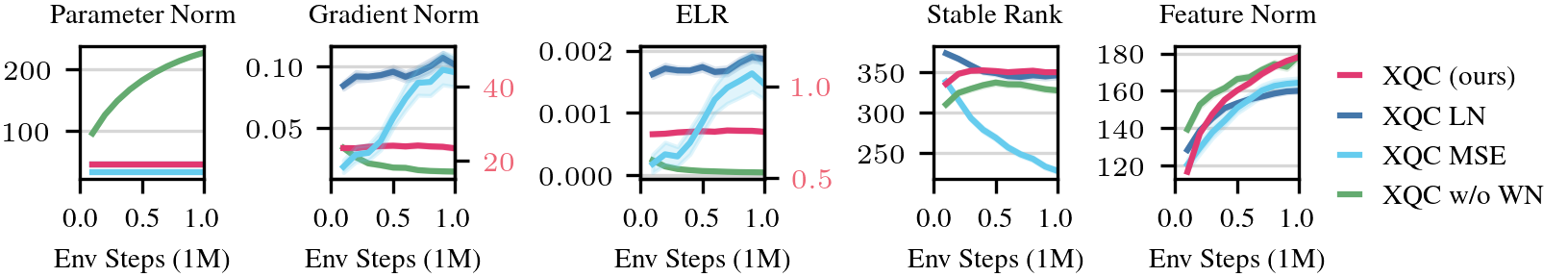}
    \caption{
    \textbf{\XQC's architecture creates exceptionally stable learning dynamics.}
    For \XQC, \BN+\WN stabilizes the parameter norm, while \BN+\CE keep the gradient norm and \ELR near constant.
    }
    \label{fig:plasticity_metrics}
\end{figure}

\subsection{Parameter and compute effciency}\label{sec:exp_param_efficiency}
\begin{figure}[b!]
    \centering
    \includegraphics[width=\linewidth]{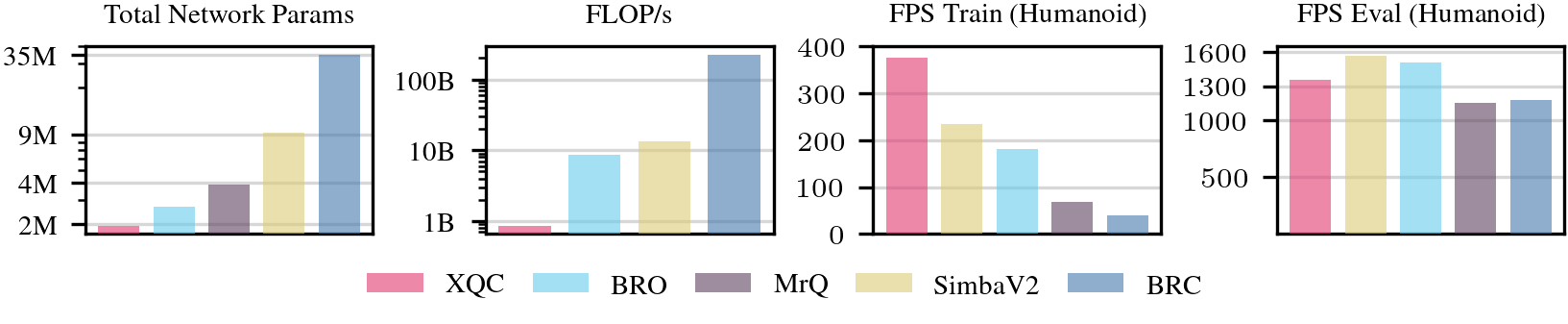}
    \vspace{-2em}
    \caption{
    \textbf{\XQC is significantly more parameter and compute efficient than
    competitive baselines.}
    $\sim4.5\times$ fewer parameters and $\sim5\times$ fewer \FLOPS than \SIMBA and \BRO and $>100\times$ fewer \FLOPS than \BRC.
    The computational efficiency translates into a significantly higher training \textsc{fps} measured on the \mujoco \humanoid environment on an \textsc{nvidia rtx} 4090 workstation.
    }
    \label{fig:efficiency}
\end{figure}
\XQC achieves competitive sample efficiency while requiring $\sim4.5\times$ fewer parameters than \SIMBA~(\Cref{fig:efficiency}).
This parameter efficiency directly results in high computational efficiency with $\sim5\times$ fewer \FLOPS than \SIMBA and \BRO and $>100\times$ fewer \FLOPS than \BRC.
We conjecture that \XQC's superior computational efficiency is rooted in its well-conditioned architecture.
With respect to practical performance, \XQC shows $60\%$ higher \frames during training, measured on the \mujoco \humanoid on a \textsc{nvidia rtx} 4090 workstation, resulting in significantly faster wall-clock times.
\XQC's evaluation performance is measured at $1355$\frames, slightly lower the \SIMBA baseline due to its slightly larger policy networks, but remains more than fast enough typical control frequencies seen in deep reinforcement learning (10--50 Hz).

\subsection{XQC scaling behaviour and architecture ablations}\label{sec:ablations}
\begin{figure}[t!]
    \centering
    \includegraphics[width=\linewidth]{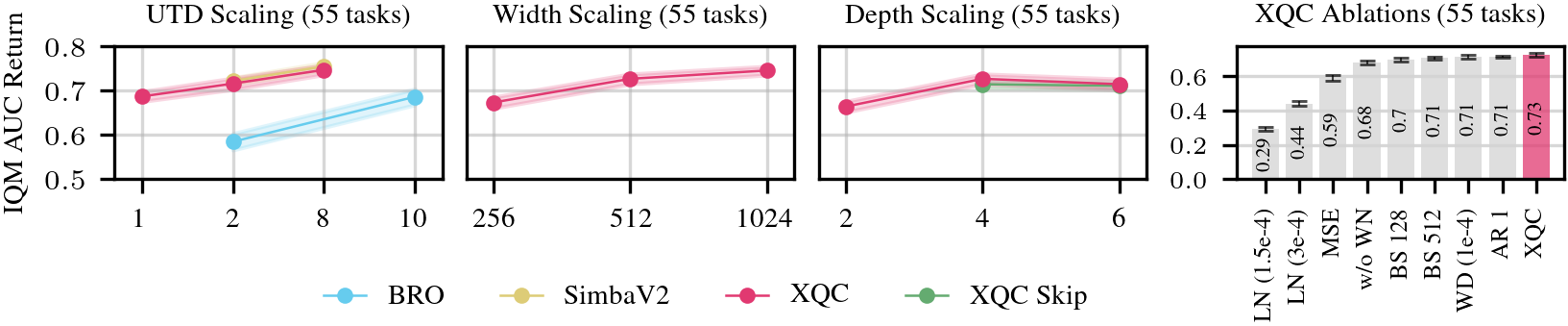}
    \caption{
    \textbf{\XQC scales stably with \UTD, network depths, and widths.}
    Columns 1--3:
    \XQC's scaling in terms of \UTD, layer width, and layer depth.
    Increasing compute and model capacity generally improves or maintains performance, demonstrating that our well-conditioned design is robust to scaling.
    Results are aggregated across all \NEnvsProprio proprioception tasks and 5 seeds each.
    Right column:
    We perform multiple architectural and parameter ablations with respect to the full \XQC algorithm.
    \XQC is robust to changing the action repeat (\textsc{ar}=1),
    introducing weight decay (\textsc{wd}),
    and different batch sizes (\textsc{bs}$\in\{128,512\}$).
    We further ablate all three of \XQC's main components,
    first without \WN (w/o \WN),
    second replacing the \CE loss with an \MSE loss (\MSE),
    and third replacing \BN with \LN.
    Each component's removal results in a significant performance drop, showing their synergistic contribution.
    }
    \label{fig:scaling}
\end{figure}

\Cref{fig:scaling} colums 1--3 demonstrate \XQC's scaling ability.
\XQC improves performance with increasing \UTD at a similar slope to \SIMBA, with a fraction of the parameters (\Cref{sec:exp_param_efficiency}).
Similarly, \XQC improves or maintains performance with larger and deeper networks.
These results demonstrate robustness towards different hyperparameters and \XQC's ability to scale stably, enabled by its well-conditioned architectural design.
This property is desirable since increasing compute always represents a trade-off between sample efficiency and wall-clock or energy. It allows practitioners to use their available budget most efficiently.
For depth scaling, we also ablate the influence of skip connections but find no significant effect for the investigated depths.
Additionally, we investigate the influence of different batch sizes $\in\{128, 512\}$ compared to the default of $256$, the use of weight decay, and the impact of action repetition.
Neither of these ablations shows a significant influence on \XQC.
\Cref{sec:additional_conditioning} presents additional conditioning experiments for the tested components.
To demonstrate the synergy between the proposed architectural components, we ablate each of the three main components of the proposed \XQC architecture: \BN, \WN, and the \CE distributional critic.
Results for all \NEnvsProprio and 5 seeds are shown in \Cref{fig:scaling} with per benchmark ablations in \Cref{fig:ablations}.
Our analysis confirms that each of the components is vital, especially in the most difficult \DMC-\texttt{hard} and \HB suits.
The largest drop in performance occurs when switching \BN for \LN.
We further investigate \LN with half the learning rate, accounting for the increased \ELR of \LN~(See~\Cref{fig:plasticity_metrics}); however, this leads to even worse performance. This is in line with our finding that \LN-based architectures show lower performance due to worse conditioning.
While using a \MSE critic loss has the second most significant influence, the removal of \WN is still substantial, but shows the lowest overall impact.
In summary, combining each of the three components is vital for \XQC's performance.

\section{Related Work}
\label{sec:related_work}
Our work is positioned at the intersection of several key research areas in deep \RL that all attempt to improve sample efficiency. 
A prevailing trend for improving sample efficiency has been scaling along different axes.
\textit{Compute scaling}, particularly increasing the \UTD ratio, has recently been a major focus.
However, naively increasing \UTD can lead to instability and overfitting on early experience~\citep{nikishin2022primacy}.
Researchers have suggested many different regularizers to stably increase the \UTD; From full parameter-resets~\citep{nikishin2022primacy,doro2022replaybarrier},
to critic ensembles~\citep{chen2021redq} and drop-out~\citep{hiraoka2021droq},
to normalization layers like \LN~\citep{hiraoka2021droq,lyle2024normalization,nauman2024bigger,lee2024simba}, \BN~\citep{bhatt2024crossq}, hyper-spherical normalization~\citep{hussing2024dissecting,lee2025hyperspherical} and spectral normalization~\citep{bjorck2022spectralnorm}.
Recently, works have found that the combination of normalization layers together with \WN can stabilize the \ELR, helping against loss of plasticity~\citep{lyle2024normalization} and also enable scaling \RL to high \UTD ratios~\citep{palenicek2025scaling}.
Recent works have worked on developing scaling laws for \RL~\citep{rybkin2025valueScalesPredictably,fu2025compute}.
\textit{Network scaling}, is another path authors are exploring to increase sample efficiency.
\citet{bhatt2024crossq} showed that \BN allowed them to significantly scale the layer width.
Since then, authors have looked into specific \LN-based architectures~\citep{nauman2024bigger,lee2024simba} and network sparsity~\citep {man2025etworkSparsity}.
\citet{lee2024simba} propose a `simplicity bias' score, computed using an FFT and scoring 'simplicity' higher for functions with lower frequency content across random initializations. This score has no theoretical justification relating it to sample efficiency or generalization.
Another line of research attempts to scale network sizes dynamically during training~\citep{liu2025neuroplastic,kang2025forget}.
Concurrent work \citet{castanyer2025stable} combine second-order optimization and multi-skip residual connections to improve scaling and monitor the trace of the Hessian for deep value-based \RL. 
Orthogonal to our work, recent methods revisit the target network design itself. \citet{vincent2025bridging} share features between online and target networks to reduce memory while preserving stability, and \citet{hendawy2025minto} take the minimum of both estimates for bootstrapping to accelerate learning.
\textit{Model-based} methods scale computation by learning a separate dynamics model, which is leveraged in the \RL loop~\citep{janner2019mbpo,hafner2020dreamer,palenicek2023diminishing,hansen2024tdmpc2}.

\section{Conclusion \& Future Work}
In this work, we shifted the focus from the prevailing pure scaling goal in deep \RL and instead focus on improving the critic's optimization landscape.
Through an eigenvalue analysis of the critic's Hessian, we demonstrate that specific architectural choices, namely batch normalization, weight normalization, and a distributional cross-entropy loss, create a better optimization landscape with a condition number orders of magnitude smaller during learning.
This superior conditioning translates directly into learning performance gains.
We propose \XQC, an algorithm embodying these principles, which achieves state-of-the-art sample efficiency across \NEnvsTotal continuous control tasks from proprioception and vision domains.
\XQC accomplishes this performance with significantly fewer parameters than competing methods, underscoring that a principled focus on optimization fundamentals can yield greater performance and efficiency than brute-force scaling alone.
For future work, insights from \XQC can be used to accelerate actor-critic model-based and imitation learning algorithms, e.g., \citet{janner2019mbpo}, \citet{watson2023coherent}.
\XQC performance could be further accelerated by incorporating prior datasets, e.g., \citet{ball2023efficient}.

\section*{Acknowledgements}
This research was funded by the research cluster ``Third Wave of AI'', funded by the excellence program of the Hessian Ministry of Higher Education, Science, Research and the Arts, hessian.AI and by the Deutsche Forschungsgemeinschaft (DFG, German Research Foundation) under Germany's Excellence Strategy (EXC-3057/1 ``Reasonable Artificial Intelligence'', Project No. 533677015).
It was further supported by a UKRI/EPSRC Programme Grant [EP/V000748/1] and partially supported by the German Federal Ministry of Research, Technology and Space (BMFTR) under the Robotics Institute Germany (RIG).

\section*{Reproducibility statement}
We took special care to ensure this work is reproducible and will make the code open source upon acceptance. 
To ease reproducibility, algorithm details are explained \Cref{sec:xqc}, 
all hyperparameters are listed in~\Cref{sec:hyperparameters},
and training curves are shown~\Cref{app:all_curves_vision,app:all_curves_proprio}.

\section*{Large language model usage}
A large language model was helpful in polishing writing, improving reading flow, and identifying
remaining typos.

\bibliography{iclr2026_conference}
\bibliographystyle{iclr2026_conference}

\appendix

\section{Hyperparameters}\label{sec:hyperparameters}
\Cref{tab:hyperparameter_our_experiments} summarizes all proprioception-based experiments' hyperparameters.
\[
\gamma = \text{clip} \left( \frac{\frac{T}{5}-1}{\frac{T}{5}}, [0.95, 0.995] \right),
\]
where \(T\) denotes the effective episode length, calculated by dividing the episode length by the number of repeated actions.
We use the default hyperparameters used in their respective GitHub repositories for all other baselines.
One exception is \BRC, where we reduced the number of parameters from the default 256M to 64M. As noted by the authors, using more than 64M parameters does not provide additional benefit in the single-task setting, which is the focus of our work.

\Cref{tab:hyper_drq} contains all hyperparameters for the vision-based experiments based on the official \DRQ codebase.

\begin{sidewaystable}[]
\caption{Hyperparameters for \XQC and baselines on all proprioception tasks.}\label{tab:hyperparameters}
\footnotesize
\begin{tabular}{llllllll}
\hline
\textbf{Hyperparameter}         & \XQC                    & \XQWN            & \SAC                    & \SIMBA                                                                  & \BRO                                                          & \BRC                    & \MrQ                  \\ \hline
Block design & \makecell[l]{Dense(dim) \\ BN \\ ReLU} & \makecell[l]{Dense(dim) \\ ReLU \\ BN} & \makecell[l]{Dense(dim) \\ ReLU}  & \makecell[l]{Dense(4 $\times$ dim) \\ Scaler \\ ReLU \\ Dense(dim) \\ L2 Norm \\ LERP \\ L2 Norm}  & \makecell[l]{Dense(dim) \\ LN \\ ReLU \\ Dense \\ LN  \\ Skip Connection} & \makecell[l]{Dense(dim) \\ LN \\ ReLU \\ Dense \\ LN  \\ Skip Connection} & \makecell[l]{Dense(dim) \\ LN \\ Activation}  \\ \\
Critic learning rate            & 0.0003                 & 0.0003                 & 0.0003                 & 0.0001                                                                    & 0.0003                                                       & 0.0003                 & 0.0003                \\
Critic hidden dim               & 512                    & 512                    & 256                    & 512   & 512                                                          & 2048                   & 512                   \\
Critic number of blocks         & 4                      & 2                      & 2                      & 2                                                                         & 2                                                            & 2                      & 3                     \\
Actor learning rate             & 0.0003                 & 0.0003                 & 0.0003                 & 0.0001                                                                    & 0.0003                                                       & 0.0003                 & 0.0003                \\
Actor number blocks             & 256                    & 256                    & 256                    & 128    & 256                                                          & 256                    & 512                   \\
Actor number of blocks          & 4                      & 2                      & 2                      & 1                                                                         & 1                                                            & 1                      & 3                     \\
Policy update delay             & 3                      & 3                      & 1                      & 1                                                                         & 1                                                            & 1                      & 1                     \\
Initial temperature             & 0.01                   & 0.01                   & 1                      & 0.01                                                                      & 1.0                                                          & 0.1                    & -                     \\
Temperature learning rate       & 0.0003                 & 0.0003                 & 0.0003                 & 0.0001                                                                    & 0.0003                                                       & 0.0003                 & -                     \\
Target entropy                  & $\left| A \right| / 2$ & $\left| A \right| / 2$ & $\left| A \right| / 2$ & $\left| A \right| / 2$                                                    & $\left| A \right| / 2$                                       & $\left| A \right| / 2$ & -                     \\
Target network momentum         & 0.005                  & 0.005                  & 0.005                  & 0.005                                                                     & 0.005                                                        & 0.005                  & 1.0                   \\
Target network update frequency & 1                      & 1                      & 1                      & 1                                                                         & 1                                                            & 1                      & 250                   \\
Number of critics               & 2                      & 2                      & 2                      & \begin{tabular}[t]{@{}l@{}}2 on MuJoCo / HB\\ 1 on DMC / Myo\end{tabular} & 2                                                            & 2                      & 2                     \\
Discount                        & Heuristic              & Heuristic              & Heuristic              & Heuristic                                                                 & 0.99                                                         & 0.99                   & 0.99                  \\
Optimizer                       & Adam                   & AdamW                  & Adam                   & Adam                                                                      & AdamW                                                        & AdamW                  & AdamW                 \\
Weight Decay                    & -                    & 0.01                   & -                    & -                                                                       & 0.0001                                                       & 0.0001                 & 0.0001                \\
Categorical Support             & [-5, 5]                & -                      & -                      & [-5, 5]                                                                   & -                                                            & [-10, 10]              & -             \\
Critic loss                 & CE                    & MSE                     & MSE                      & CE                                                                       & MSE                                                            & CE                    & L1 Loss                    \\
UTD                             & 2                      & 2                      & 1                      & 2                                                                         & \begin{tabular}[t]{@{}l@{}}2 BRO Small\\ 10 BRO\end{tabular} & 2                      & 1                     \\
Batch size                      & 256                    & 256                    & 256                    & 256                                                                       & 128                                                          & 1024                   & 256                   \\ \\
Action repeat & \makecell[l]{MuJoCo: 1 \\ DMC: 2 \\ HB: 2 \\ Myo: 2} & \makecell[l]{MuJoCo: 1 \\ DMC: 2 \\ HB: 2 \\ Myo: 2} & \makecell[l]{MuJoCo: 1 \\ DMC: 2 \\ HB: 2 \\ Myo: 2} & \makecell[l]{MuJoCo: 1 \\ DMC: 2 \\ HB: 2 \\ Myo: 2} & \makecell[l]{MuJoCo: 1 \\ DMC: 1 \\ HB: 1 \\ Myo: 1} & \makecell[l]{MuJoCo: 1 \\ DMC: 2 \\ HB: 2 \\ Myo: 2} & \makecell[l]{MuJoCo: 1 \\ DMC: 2 \\ HB: 2 \\ Myo: 2} \\
\hline
\end{tabular}
\label{tab:hyperparameter_our_experiments}
\end{sidewaystable}

\begin{table}[]
\caption{Hyperparameters for vision-based \RL tasks.}\label{tab:hyper_drq}
\footnotesize
\begin{tabular}{lll}
\hline
\textbf{Hyperparameter}         & \XQC & \DRQ    \\ \hline
Block design                    &  \makecell[l]{Dense(dim) \\ BN \\ ReLU}            &  \makecell[l]{Dense(dim) \\ ReLU}    \\  \\
Critic learning rate            & 0.0001       & 0.0001 \\
Critic hidden dim               & 1024          & 1024   \\
Critic number of blocks         & 4            & 2      \\
Actor learning rate             & 0.0001       & 0.0001 \\
Actor hidden dim                & 1024          & 1024 \\
Actor number of blocks          & 4            & 3      \\
Target network momentum         & 0.01         & 0.01   \\
Target network update frequency & 1            & 1      \\
Number of critics               & 2            & 2      \\
Discount                        & 0.99         & 0.99   \\
Optimizer                       & Adam         & Adam   \\
Categorical Support             & {[}-5, 5{]}  & -      \\
Critic Loss                     & CE           & MSE    \\
UTD                             & 0.5          & 0.5    \\
Batch size                      & 256          & 256    \\
Replay buffer capacity          & 1M          & 1M    \\
N-step returns                  & 3            & 3      \\
Feature dim                     & 50           & 50     \\
Exploration stddev. clip        & 0.3          & 0.3    \\
Action Repeat                   & 2            & 2      \\ 
\begin{tabular}[c]{@{}l@{}}Exploration stddev. schedule\\ (Defined by task difficulty)\end{tabular}     &  \makecell[l]{easy: linear(1.0, 0.1, 100K) \\ medium: linear(1.0, 0.1, 500K) \\ hard: linear(1.0, 0.1, 2M)} & \makecell[l]{easy: linear(1.0, 0.1, 100K) \\ medium: linear(1.0, 0.1, 500K) \\ hard: linear(1.0, 0.1, 2M)}        \\ \hline
\end{tabular}
\end{table}

\newpage
\section{Environment aggregation details}\label{app:aggregation_details}
As the magnitude of returns varies across environments, we normalize them for comparability before aggregating~\citep{agarwal2021iqm}. We normalize scores to be between 0 and 1.
Where the normalization protocols are benchmark-specific and follow standard practice.

\textbf{For MuJoCo and Humanoidbench,} we compute the normalized score as 
$$\hat{x} = \frac{x - \text{Random Score}}{\text{Target Score} - \text{Random Score}},$$
where the random scores are obtained using a uniformly random policy~\citep{fu2020d4rl}. Target scores are taken from a trained TD3 policy in \mujoco, and are provided by the authors for \HB, where they represent the threshold required to mark a task as solved. 

\textbf{For \DMC tasks,} we normalize by dividing the final score by 1000, the maximum achievable return. 

\textbf{MyoSuite} tasks require no normalization, as performance is already expressed in percentage-based success rates.

\newpage
\section{Baselines}\label{app:baselines}
In this section, we briefly describe how the results were collected for every baseline we present in this work. Additionally, all hyperparameters are listed in \Cref{sec:hyperparameters}.

\paragraph{\SIMBA~\citep{lee2025hyperspherical}.} We used the results made publicly available on the official GitHub repository. The results are based on 10 seeds. For \SIMBA (small) we ran the code from the official codebase ourselves, for 10 seeds.

\paragraph{\XQWN~\citep{palenicek2025scaling}.} We ran all experiments ourselves using our codebase for 10 seeds.

\paragraph{\BRO~\citep{nauman2024bigger}.}
We used the publicly available results on the official \SIMBA GitHub repository. We only considered the `small' version of BRO, which uses a UTD ratio of 2. The results are based on 5 seeds.

\paragraph{\MrQ~\citep{fujimoto2025mrq}.}
For \mujoco and \DMC, we used the results provided by \SIMBA on their official GitHub repository, which are based on 10 seeds.  We conducted experiments for \Myo and \HB ourselves, by running the official \MrQ codebase using 5 random seeds due to computational reasons. We matched the action repeat used in our experiments to ensure a fair comparison.

\paragraph{\SAC~\citep{haarnoja2018sac}.} We ran all experiments ourselves using our codebase. We used the default \SAC hyperparameters and ran 5 seeds for every environment.

\paragraph{\BRC~\citep{nauman2025brc}.} Using the official \BRC GitHub codebase, we experiment ourselves for 3 seeds, due to computational constraints.
While we used their default settings as reported in the paper, reducing the parameters to 64M, since \citet{nauman2025brc} noted that using more than 64M parameters provides no benefit for single-task settings.
Additionally, we used the same action repeat environment wrapper used in all our other experiments, ensuring a fair comparison.

\paragraph{\DRQ~\citep{hiraoka2021droq}.}
We used the results reported in the offical GitHub repository based on 10 seeds.
Our vision-based \XQC experiments are based on the \DRQ codebased for a fair comparison.

\newpage
\section{All Training Curves: Reinforcement Learning From Vision-based DMC Environments}\label{app:all_curves_vision}
Results from \RL on the vision-based DMC benchmarks. We compare to \DRQ~\citep{yaratsmastering}.

\begin{figure}[h!]
    \centering
    \includegraphics[width=\linewidth]{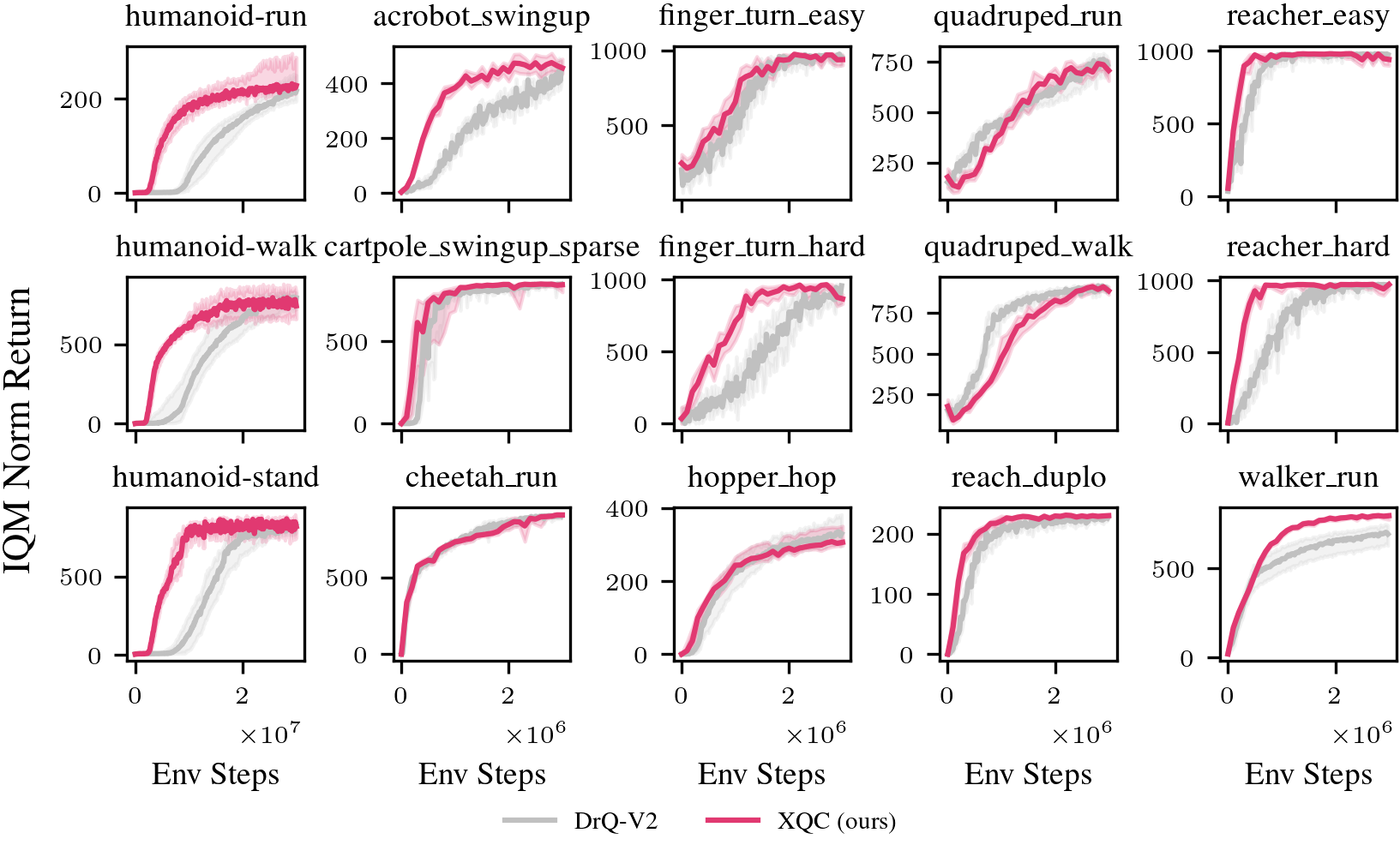}
    \caption{\XQC and \DRQ training curves for each of the \NEnvsVision vision-based \DMC tasks. We show the \IQM and 90\% \SBCI aggregated over 10 seeds per environment.}
\end{figure}

\begin{figure}[h!]
    \centering
    \includegraphics[width=\linewidth]{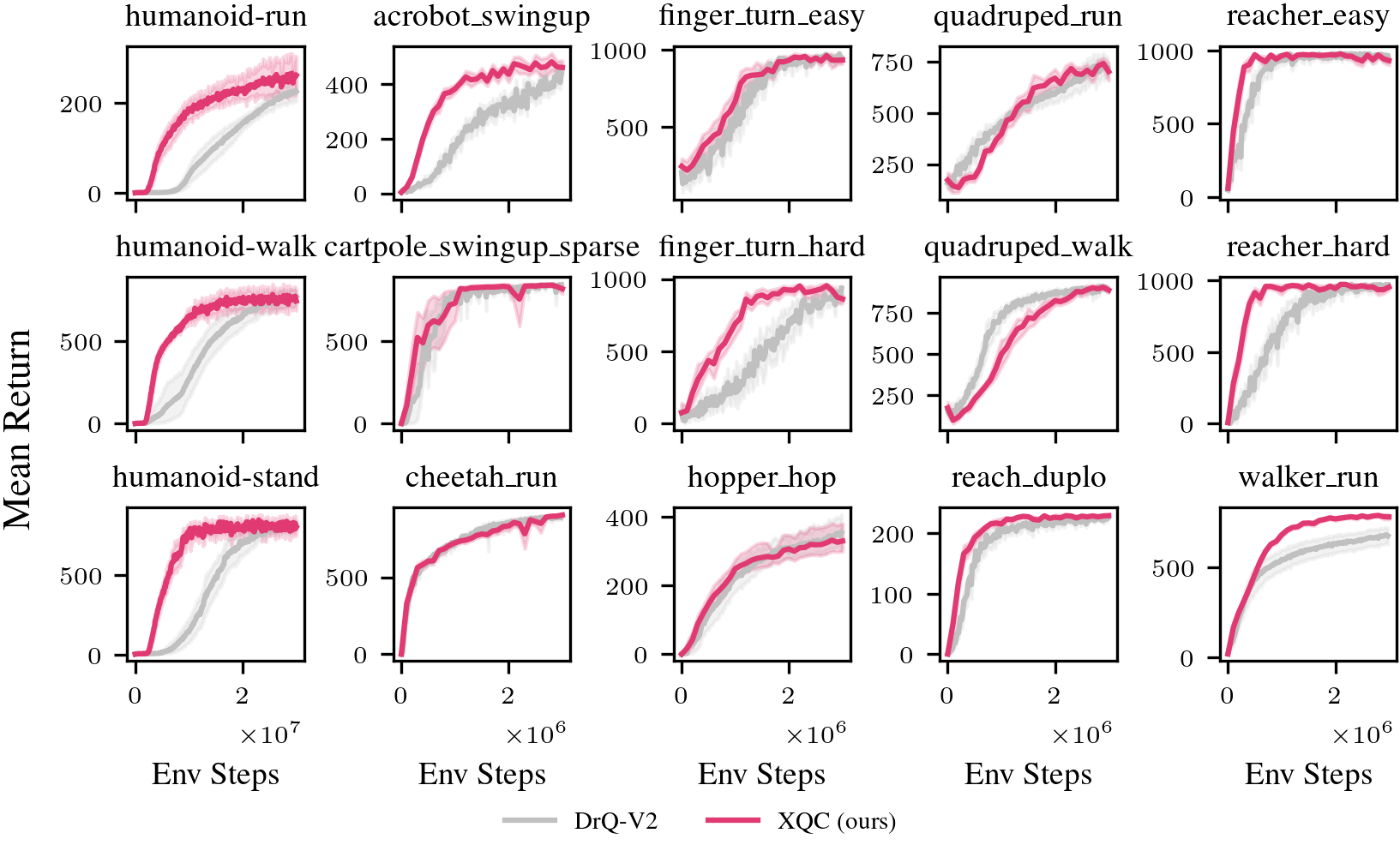}
    \caption{\XQC and \DRQ training curves for each of the \NEnvsVision vision-based \DMC tasks. We show the mean and 90\% \SBCI aggregated over 10 seeds per environment.}
\end{figure}

\newpage
\section{All Training Curves: Reinforcement Learning From Proprioception}\label{app:all_curves_proprio}
All results for the proprioception continuous control benchmarking tasks with \IQM and mean.

\subsection{HumanoidBench}
\begin{figure}[h!]
    \centering
    \includegraphics[width=\linewidth]{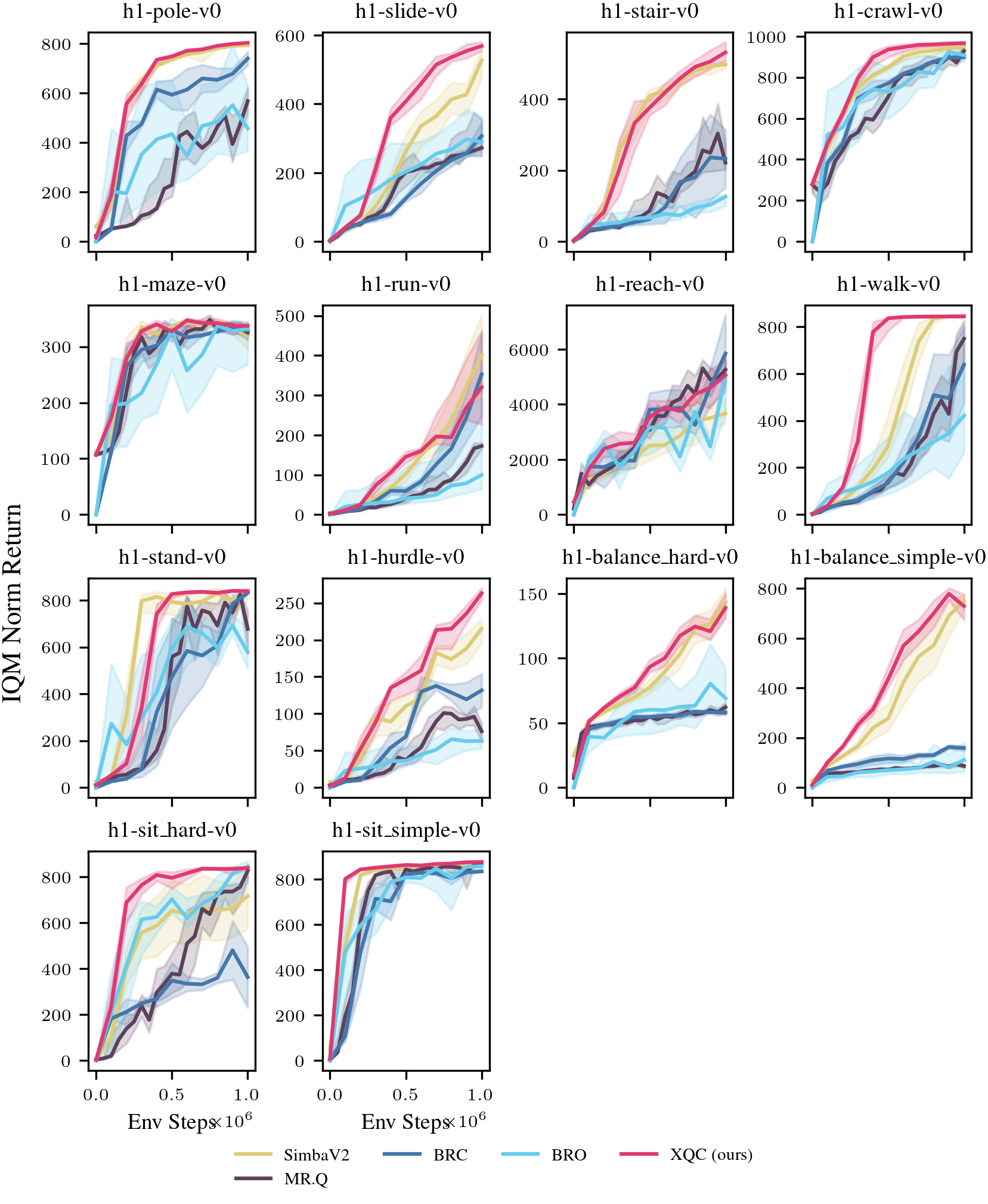}
    \caption{\XQC and baseline training curves for each of the \NEnvsHB \HB tasks. We show the \IQM and 90\% \SBCI aggregated per environment.}
\end{figure}

\begin{figure}[h!]
    \centering
    \includegraphics[width=\linewidth]{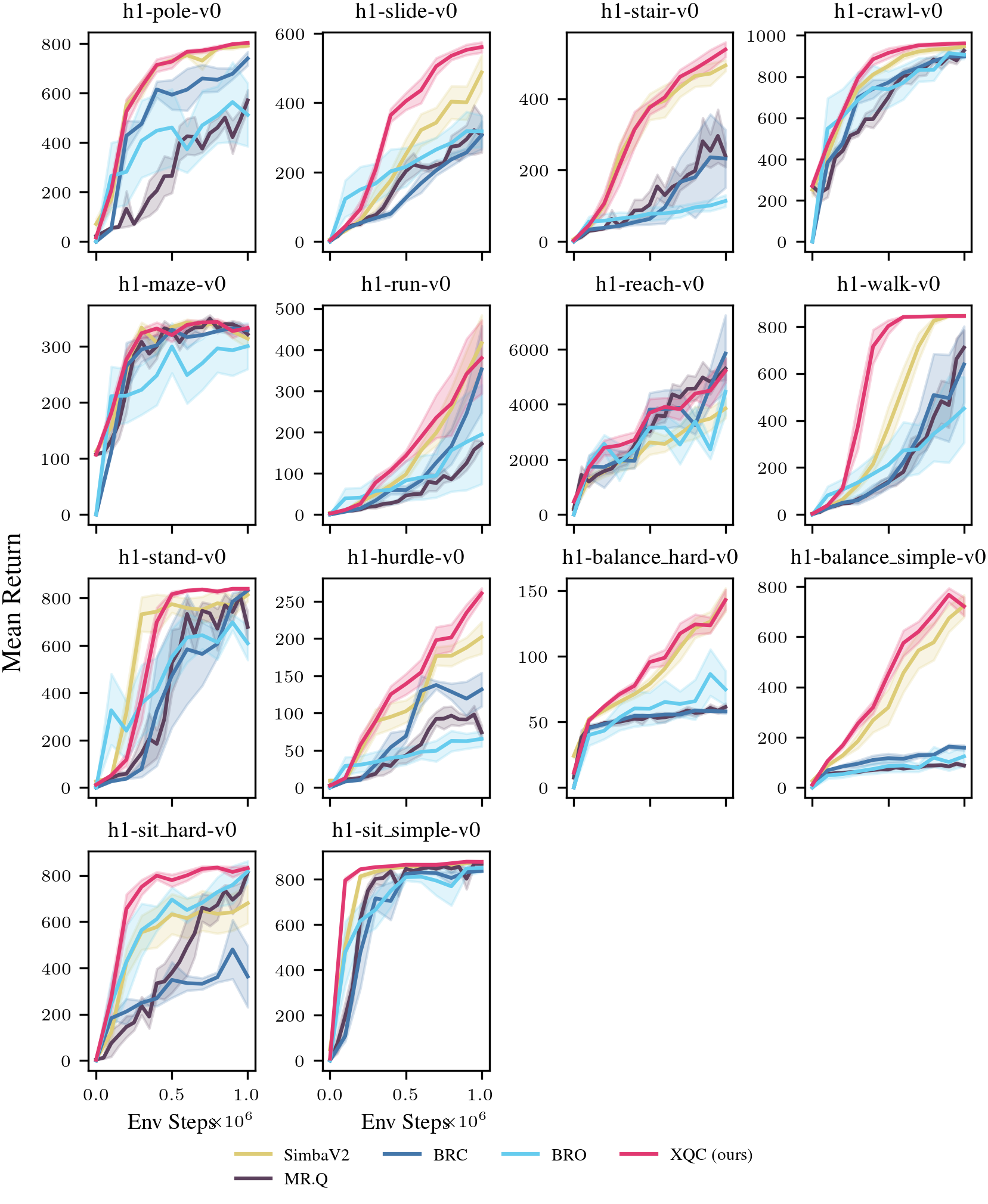}
    \caption{\XQC and baseline training curves for each of the \NEnvsHB \HB tasks. We show the mean and 90\% \SBCI aggregated per environment.}
\end{figure}

\clearpage
\subsection{DeepMind Control Suite}
\begin{figure}[h!]
    \centering
    \includegraphics[width=\linewidth]{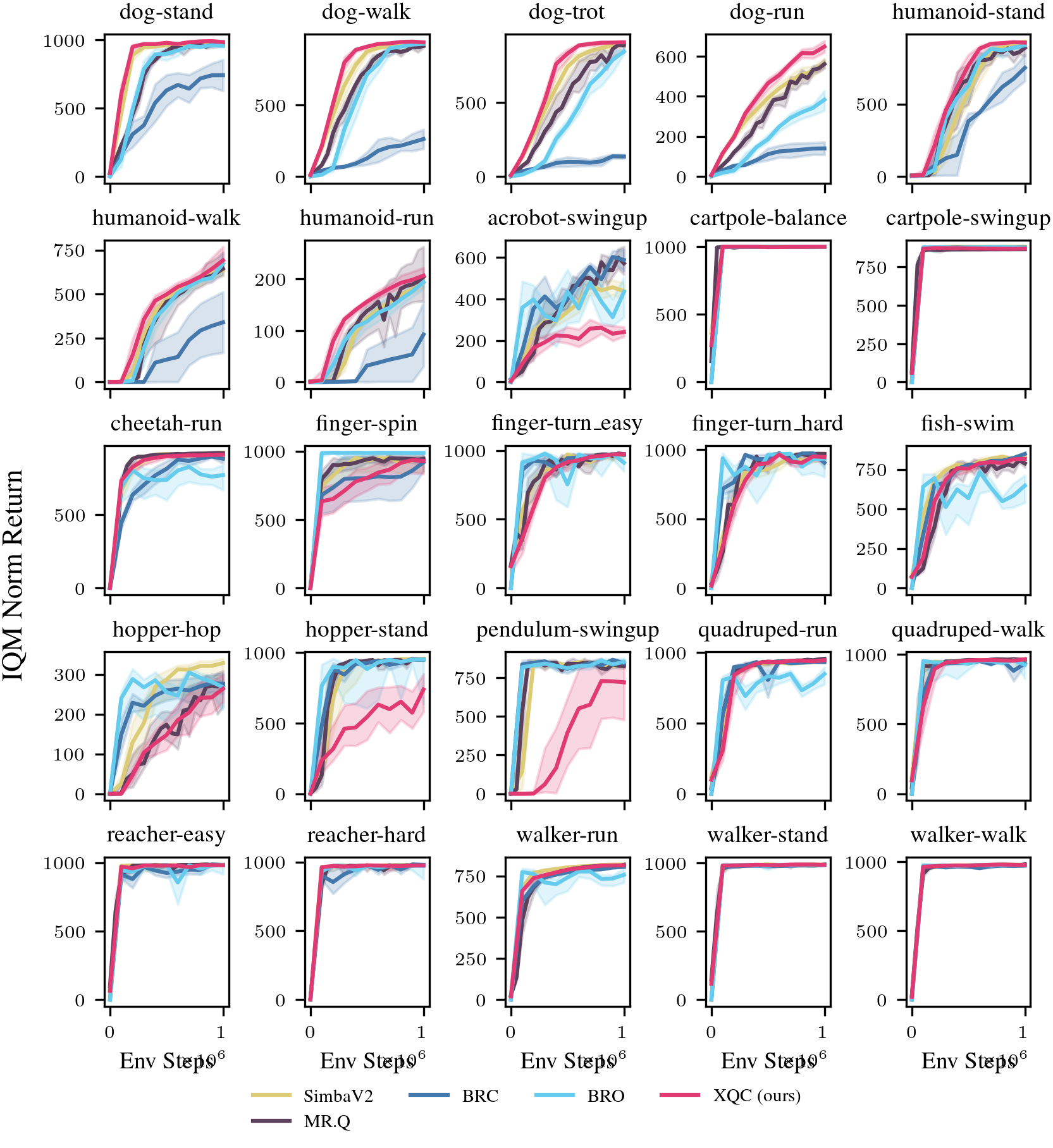}
    \caption{\XQC and baseline training curves for each of the \NEnvsDMC \DMC tasks. We show the \IQM and 90\% \SBCI aggregated per environment.}
\end{figure}

\begin{figure}[h!]
    \centering
    \includegraphics[width=\linewidth]{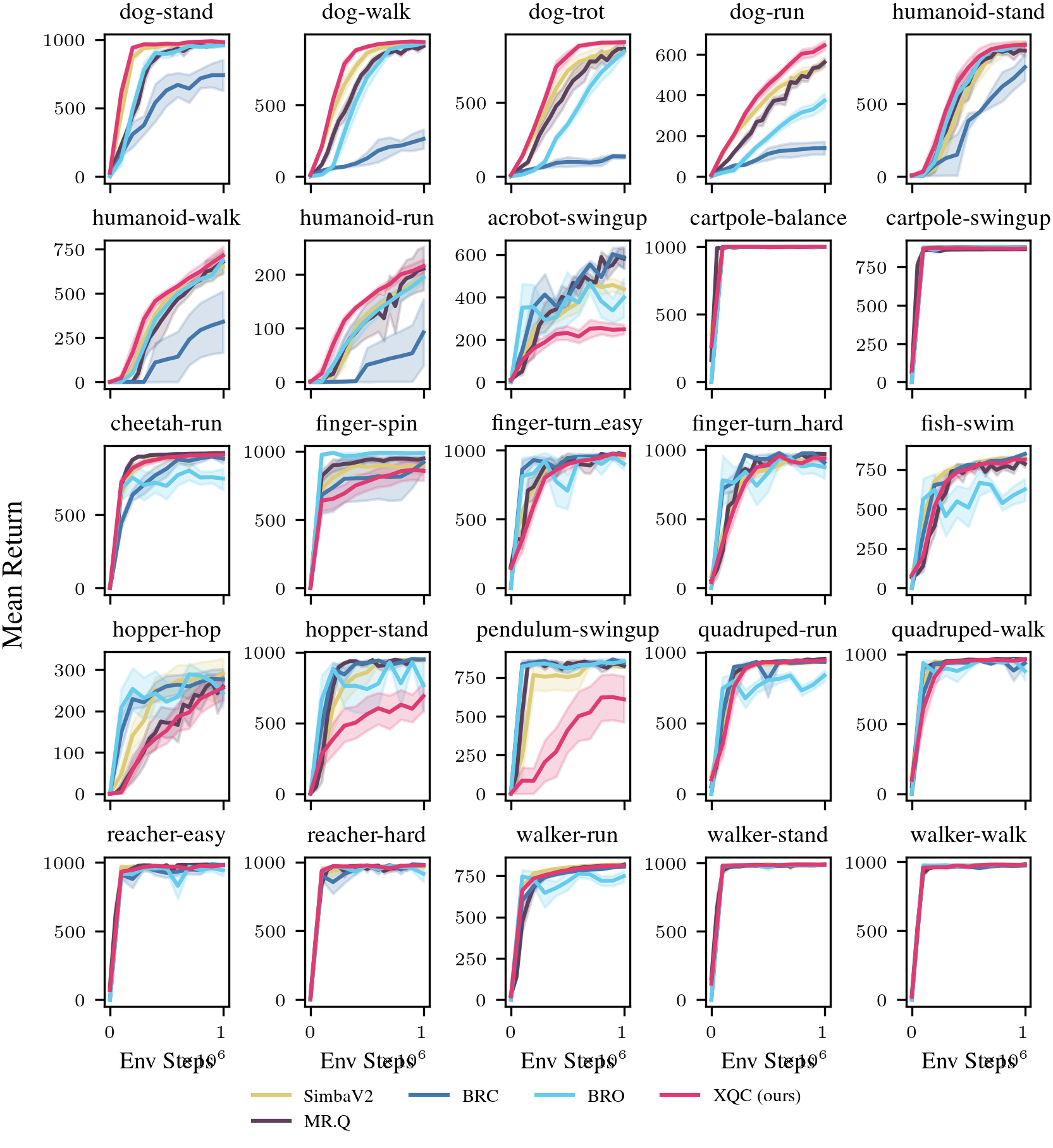}
    \caption{\XQC and baseline training curves for each of the \NEnvsDMC \DMC tasks. We show the mean and 90\% \SBCI aggregated per environment.}
\end{figure}

\clearpage
\subsection{MyoSuite}

\begin{figure}[h!]
    \centering
    \includegraphics[width=\linewidth]{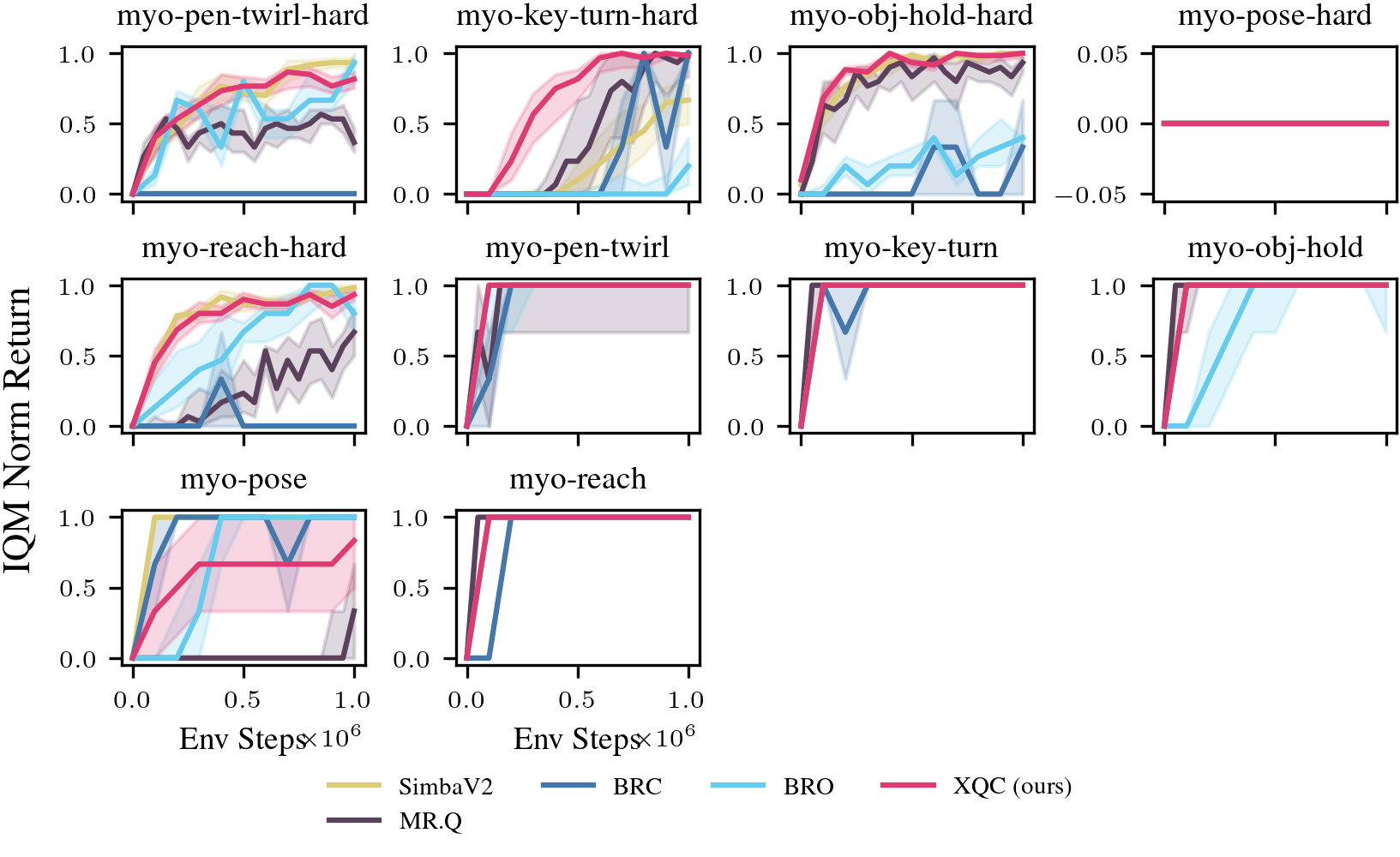}
    \caption{\XQC and baseline training curves for each of the \NEnvsMYO \Myo tasks. We show the \IQM and 90\% \SBCI aggregated per environment.}
\end{figure}

\begin{figure}[h!]
    \centering
    \includegraphics[width=\linewidth]{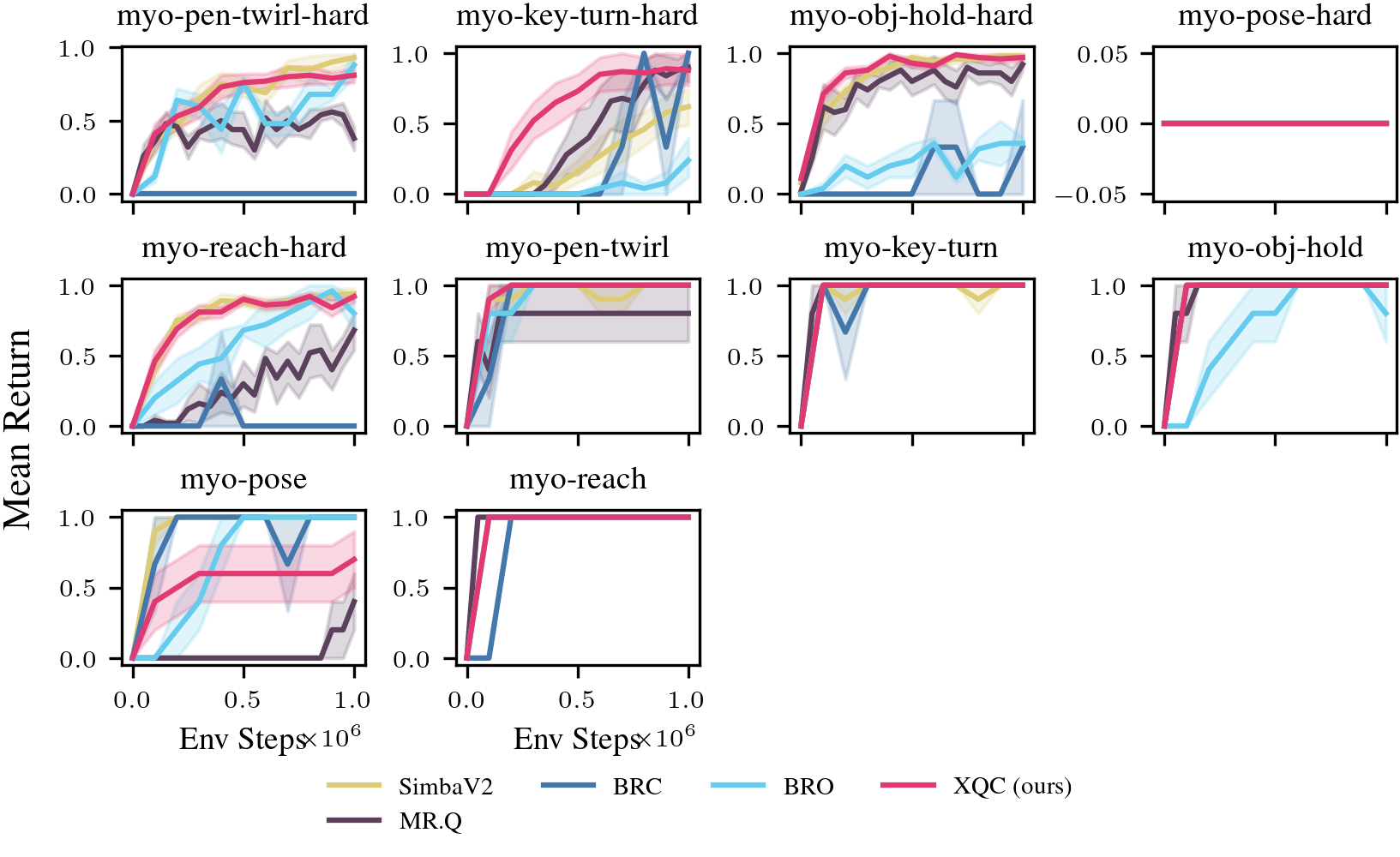}
    \caption{\XQC and baseline training curves for each of the \NEnvsMYO \Myo tasks. We show the mean and 90\% \SBCI aggregated per environment.}
\end{figure}

\clearpage
\subsection{MuJoCo Benchmark}
\begin{figure}[h!]
    \centering
    \includegraphics[width=\linewidth]{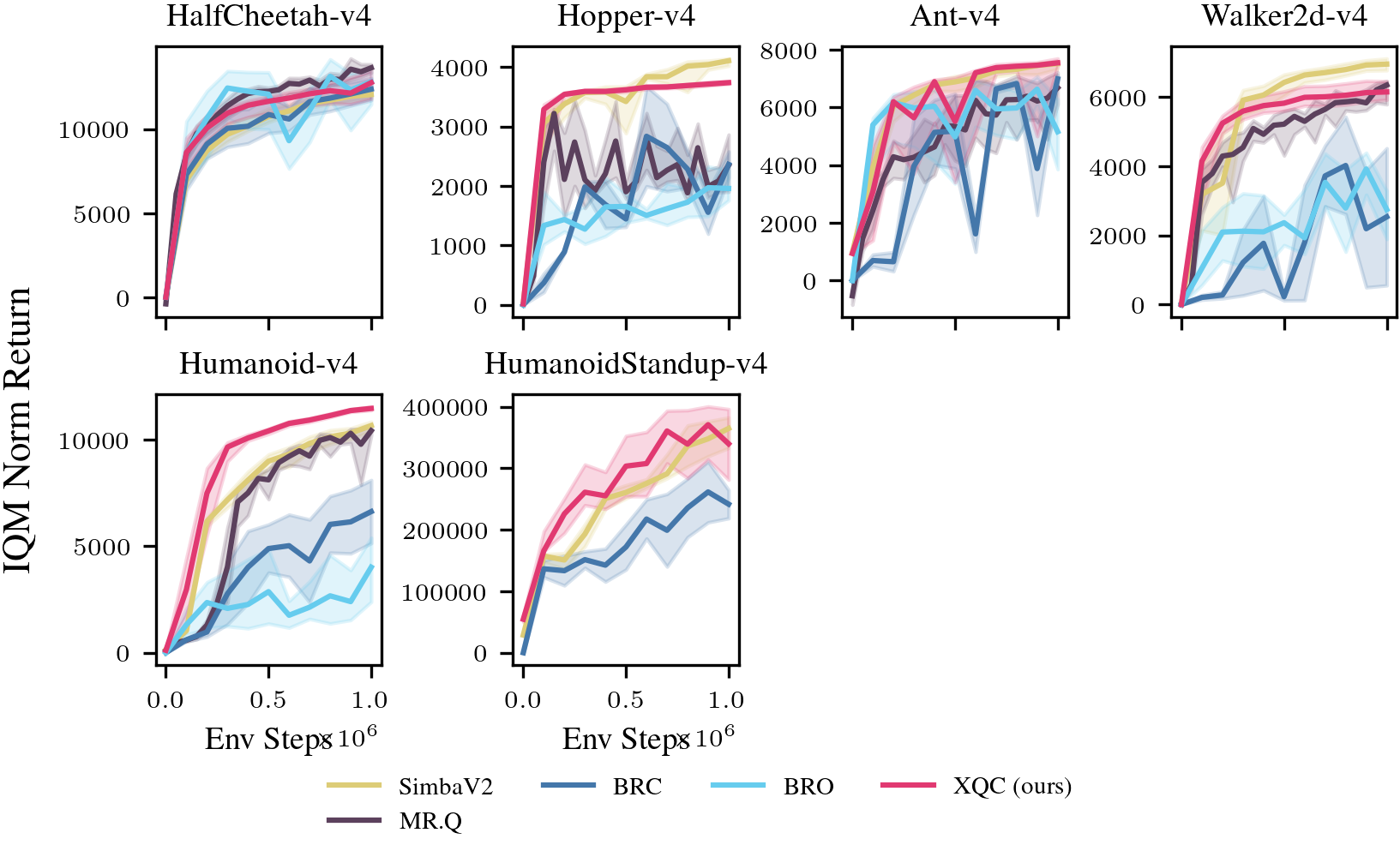}
    \caption{\XQC and baseline training curves for each of the \NEnvsMUJOCO \mujoco tasks. We show the \IQM and 90\% \SBCI aggregated per environment.}
\end{figure}

\begin{figure}[h!]
    \centering
    \includegraphics[width=\linewidth]{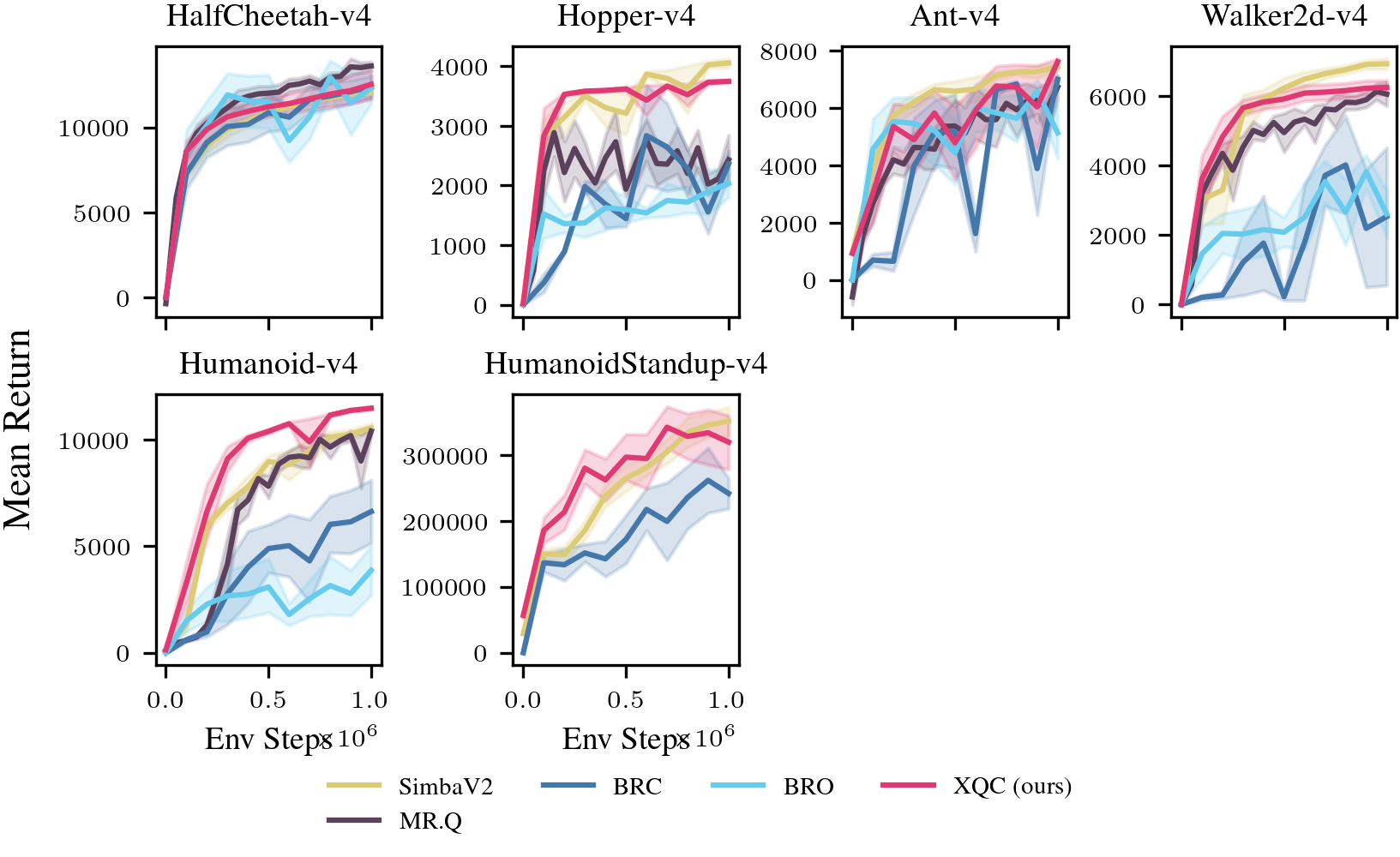}
    \caption{\XQC and baseline training curves for each of the \NEnvsMUJOCO \mujoco tasks. We show the mean and 90\% \SBCI aggregated per environment.}
\end{figure}

\newpage
\section{Plasticity Metrics}\label{app:plasticity}
\begin{figure}[h!]
    \centering
    \includegraphics[width=\linewidth]{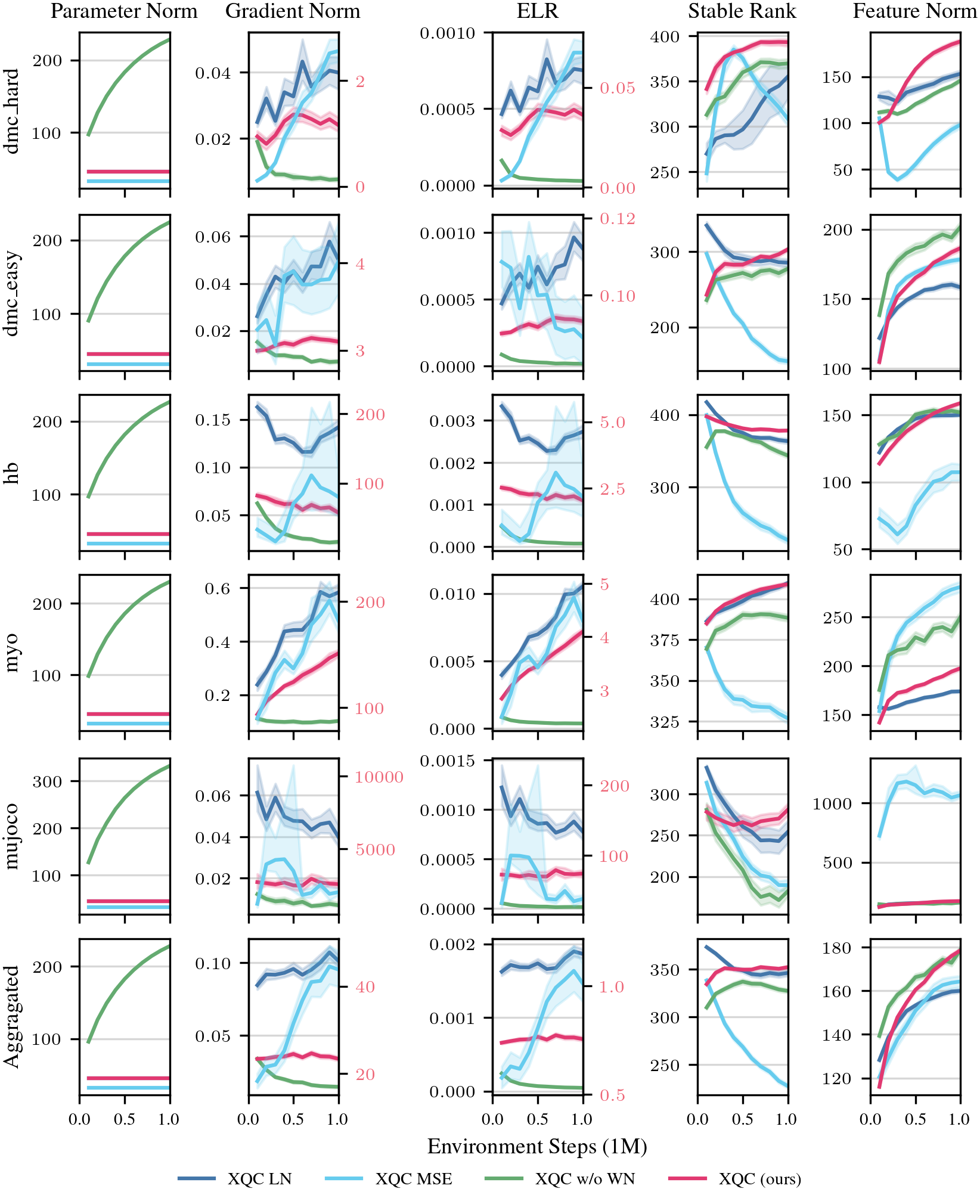}
    \caption{Per benchmark plasticity metrics for \XQC and architectural ablations.}
\end{figure}

\newpage
\section{Architecture ablations}\label{sec:ablations}
\begin{figure}[h!]
    \centering
    \includegraphics[width=\linewidth]{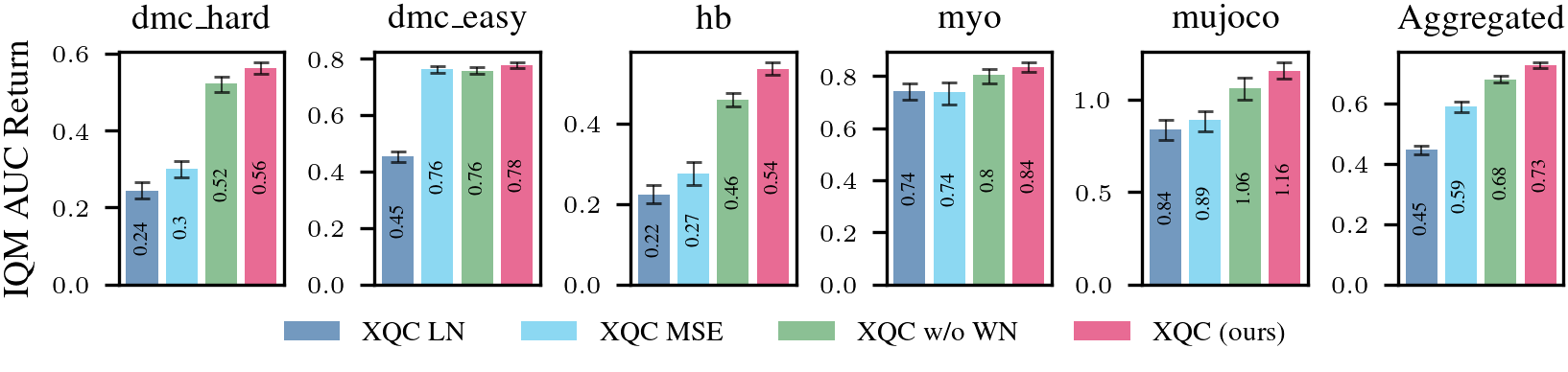}
    \caption{
    \textbf{Ablation study confirms the necessity of all three of \XQC's components.}
    We compare the full \XQC algorithm against three variants:
    one replacing \BN with \LN (\XQC \LN),
    one replacing the \CE loss with an \MSE loss (\XQC \MSE),
    and one without \WN (\XQC w/o \WN).
    Each component's removal results in a significant performance drop, demonstrating their synergistic contribution.
    }
    \label{fig:ablations}
\end{figure}

\newpage
\section{XQC UTD Scaling Training Curves}\label{app:utd_scaling}
\begin{figure}[h!]
    \centering
    \includegraphics[width=\linewidth]{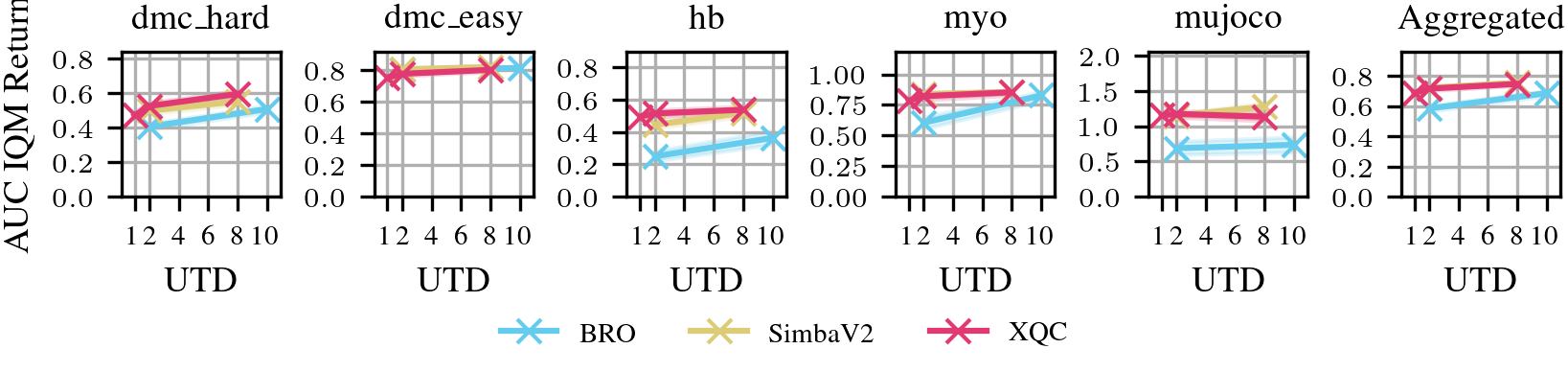}
    \caption{
    \textbf{\XQC stably improves with increased \UTD ratios.}
    We compare \IQM \AUC for \XQC trained with \UTD ratios $\in\{1,2,8,16\}$.
    Performance consistently improves with more updates, showcasing the stability of the learning process.
    }
\end{figure}
\begin{figure}[h!]
    \centering
    \includegraphics[width=\linewidth]{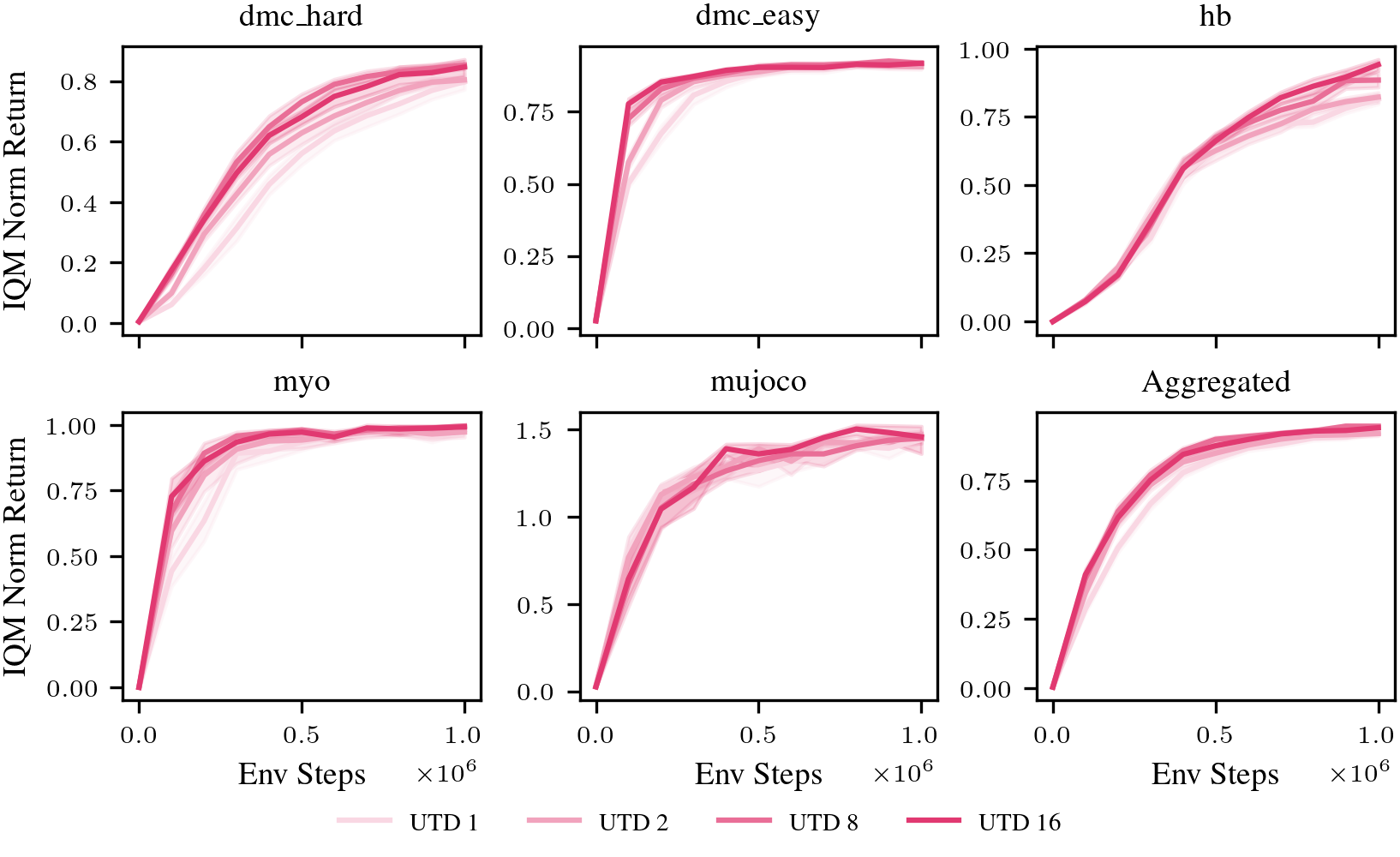}
    \caption{\UTD scaling. We present the area under the curve of the \IQM Norm Return. This measure captures fast and stable learning simultaneously.}
    \label{fig:scaling_utd}
\end{figure}

\clearpage
\section{Additional Conditioning Results}\label{sec:additional_conditioning}
Here we present additional conditioning results for alternative architectural components requested by the reviewers during the rebuttal.
We ablate combining \XQC with skip connections (with network depth in $\in\{4,6\}$, different batch sizes $\in\{128,256,512\}$ and weight decay $1e-4$.
Further, we investigate how depth scaling influences conditioning for dense \textsc{mlp}s with number of layers $\in\{2,4,6\}$.
The chosen architectural ablations on top of \XQC do not meaningfully influence conditioning or performance. The overall finding is in line with our main finding, that good conditioning correlates with good \RL performance.

\begin{figure}[h]
    \centering
    \includegraphics[width=\linewidth]{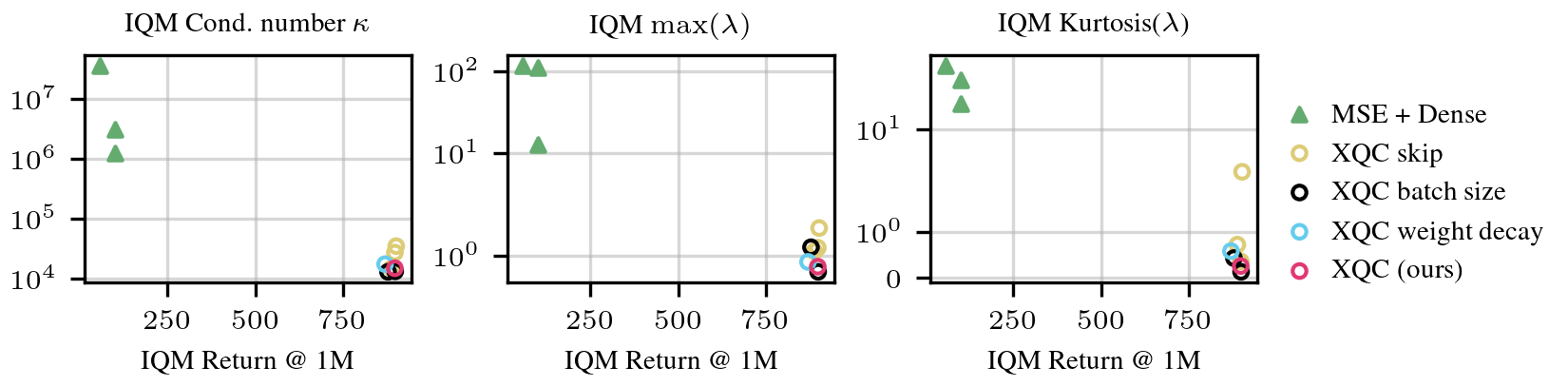}
    \caption{
    The condition numbers and maximum eigenvalues against the return at 1M steps on \DMC \texttt{dog-trot}.
    Architectures with lower condition numbers and lower maximum eigenvalues tend to have better final returns.
    }
    \label{fig:spectrum_summary_additional_ablations}
\end{figure}

\clearpage
\section{Theoretical analysis}
\label{app:theoretical_anaylsis}
The section details the proofs for bounding the gradient norms and Hessian condition numbers.
\begin{lemma}
\label{lem:chain_lipschitz}
For the loss $\gL(\vtheta, \gD) = l(\mY, \vf_\vtheta( \mX))$, if $\vf$ is $L_\vf$ Lipschitz in the L2 norm with respect to $\vtheta$, the L2 norm of the gradient has the following upper bound,
\begin{align}
    ||\nabla_\vtheta \gL(\vtheta,\vy, \vx)||_2 \leq L_\vf \cdot ||\nabla_\vf l(\vy, \vf_\vtheta( \vx))||_2.
\end{align}
\end{lemma}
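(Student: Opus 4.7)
The statement is essentially a chain rule combined with an operator-norm inequality, so the plan is to decompose $\nabla_\vtheta \gL$ into the Jacobian of $\vf_\vtheta$ with respect to $\vtheta$ times the gradient of $l$ with respect to $\vf$, and then bound the Jacobian spectral norm by $L_\vf$ using the Lipschitz assumption.

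\textbf{Step 1: Chain rule.} Assuming differentiability of $\vf_\vtheta$ in $\vtheta$ (at least almost everywhere, which holds for standard neural network parameterizations), I would write
\begin{equation*}
    \nabla_\vtheta \gL(\vtheta,\vy,\vx) \;=\; J_\vtheta \vf_\vtheta(\vx)^\top\, \nabla_\vf l\bigl(\vy, \vf_\vtheta(\vx)\bigr),
\end{equation*}
where $J_\vtheta \vf_\vtheta(\vx)$ is the Jacobian of the vector-valued map $\vtheta \mapsto \vf_\vtheta(\vx)$ at the current $\vtheta$.

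\textbf{Step 2: Operator-norm inequality.} Applying the defining inequality $\|A\vu\|_2 \leq \|A\|_{op}\,\|\vu\|_2$ of the spectral norm to the product above yields
\begin{equation*}
    \|\nabla_\vtheta \gL(\vtheta,\vy,\vx)\|_2 \;\leq\; \bigl\|J_\vtheta \vf_\vtheta(\vx)^\top\bigr\|_{op}\cdot \bigl\|\nabla_\vf l(\vy,\vf_\vtheta(\vx))\bigr\|_2,
\end{equation*}
using that the spectral norm is invariant under transposition.

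\textbf{Step 3: Bound the Jacobian norm by $L_\vf$.} Here is the main (minor) subtlety: I need to translate the $L_\vf$-Lipschitz property of $\vtheta \mapsto \vf_\vtheta(\vx)$ into $\|J_\vtheta \vf_\vtheta(\vx)\|_{op} \leq L_\vf$. Wherever $\vf_\vtheta$ is differentiable in $\vtheta$, taking a directional limit along any unit vector $\vu$ gives $\|J_\vtheta \vf_\vtheta(\vx)\,\vu\|_2 = \lim_{h\to 0}\|\vf_{\vtheta+h\vu}(\vx) - \vf_\vtheta(\vx)\|_2/|h| \leq L_\vf$, so the operator norm is at most $L_\vf$. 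For standard (piecewise smooth) neural network parameterizations this holds almost everywhere, and Rademacher's theorem guarantees differentiability a.e.\ in general, which is all that is needed to close the argument. Substituting into Step 2 gives the claimed inequality, completing the proof.
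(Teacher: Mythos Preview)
Your proof is correct and follows essentially the same route as the paper's: chain rule to factor $\nabla_\vtheta \gL$ through the Jacobian $J_\vtheta \vf_\vtheta(\vx)$, then a norm inequality to separate the two factors, then the Lipschitz assumption to bound the Jacobian. The paper's version is a one-liner invoking ``the chain rule and the Cauchy--Schwarz inequality'' and writing $\|\nabla_\vtheta \vf_\vtheta(\vx)\|_2 \leq L_\vf$ without further comment; your Steps~2--3 are more precise in naming the operator-norm inequality and in spelling out (via the directional-limit argument and Rademacher's theorem) why the Lipschitz constant controls the Jacobian's spectral norm, but the underlying argument is identical.
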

\begin{proof}
Using the chain rule and the Cauchy-Schwarz inequality,
\begin{align}
    ||\nabla_\vtheta \gL(\vtheta,\vy, \vx)||_2 
    &\leq 
    ||\nabla_\vf l(\vy, \vf_\vtheta( \vx))||_2
    \cdot
    ||\nabla_\vtheta \vf_\vtheta( \vx))||_2
    \leq
    ||\nabla_\vf l(\vy, \vf_\vtheta( \vx))||_2
    \cdot L_\vf.
\end{align}
\end{proof}

\begin{proof}[Proof of Proposition \ref{prop:mse_grad}]
Standard calculus.
\end{proof}

\begin{proof}[Proof of Proposition \ref{prop:ce_grad}]
Standard calculus, and then using the difference of two categorical probability vectors (on a simplex) to bound the largest squared error as $2$.
\end{proof}

\begin{proof}[Proof of Theorem \ref{th:ce_elr}]
Combine Lemma \ref{lem:chain_lipschitz}, Proposition \ref{prop:ce_grad}, and the constrained parameter norm for the definition of the gradient update in Definition \ref{def:elr} to obtain an upper bound.
\end{proof}

\begin{lemma}
\label{lem:weyl}
For a symmetric matrix $\mA\in\sR^{m\times m}$ with ranked eigenvalues ${\lambda^A_1 \leq \dots \leq \lambda^A_m}$, then the eigenvalues of the sum of two such matrices $\mC=\mA + \mB$, then $\lambda^A_1 + \lambda^B_1\leq\lambda^C_1$ and 
$\lambda^A_m + \lambda^B_m \geq \lambda^C_m$.
This result holds for all finite sums.
\end{lemma}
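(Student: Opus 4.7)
The plan is to use the Rayleigh quotient characterization of the extremal eigenvalues of a symmetric matrix. For any symmetric $\mM\in\sR^{m\times m}$, the spectral theorem gives
\begin{align*}
    \lambda^M_1 = \min_{\|\vx\|_2=1} \vx^\top \mM \vx,
    \qquad
    \lambda^M_m = \max_{\|\vx\|_2=1} \vx^\top \mM \vx.
\end{align*}
I would take this as the starting point rather than chasing eigenvectors directly, since the inequality is really a statement about extremizing a sum of quadratic forms.

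The core step is then a one-line use of linearity together with the individual lower/upper bounds. For the smallest eigenvalue, for any unit vector $\vx$ we have $\vx^\top \mA \vx \geq \lambda^A_1$ and $\vx^\top \mB \vx \geq \lambda^B_1$, so
\begin{align*}
    \vx^\top \mC \vx = \vx^\top \mA \vx + \vx^\top \mB \vx \geq \lambda^A_1 + \lambda^B_1.
\end{align*}
Minimizing the left-hand side over unit vectors yields $\lambda^C_1 \geq \lambda^A_1 + \lambda^B_1$. The argument for the largest eigenvalue is symmetric: the same unit-vector bound with $\leq$ and $\lambda^A_m,\lambda^B_m$ on the right, followed by taking the maximum, gives $\lambda^C_m \leq \lambda^A_m + \lambda^B_m$.

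Finally, the extension to a finite sum $\mA_1 + \dots + \mA_k$ follows by an immediate induction: the sum of symmetric matrices is symmetric, so the two-term bound applies at each step. The only (mild) obstacle is bookkeeping: one must ensure the ordering convention $\lambda_1\leq\dots\leq\lambda_m$ is respected and that the Rayleigh characterization is applied to the same index on both sides, but no further analytic work is required. This is the classical Weyl inequality specialized to the extreme eigenvalues.
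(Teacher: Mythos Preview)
Your proof is correct. The Rayleigh-quotient characterization of the extremal eigenvalues is exactly the right tool here, and the linearity-plus-pointwise-bound argument you give is the standard elementary proof of this special case of Weyl's inequality; the induction to finite sums is immediate as you note.

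By comparison, the paper does not give a proof at all: it simply cites Weyl's theorem for sums of Hermitian matrices from a linear algebra reference. So your approach is strictly more self-contained than the paper's, which defers the work entirely to the literature. What you gain is an explicit, one-paragraph argument that requires nothing beyond the spectral theorem; what the paper's citation buys is brevity and coverage of the full Weyl inequality (bounds on \emph{all} eigenvalues $\lambda^C_k$, not just the extremes), though only the extreme cases are actually used downstream in Propositions~\ref{prop:mse_hess} and~\ref{prop:ce_hess}.
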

\begin{proof}
Weyl's theorem applied to the sum of two Hermitian matrices \citep{bodmann2012matrix}. 
\end{proof}

\begin{proposition}
The mean squared error loss, $l(\vy, \hat{\vy}) = \frac{1}{2}||\vy -\hat{\vy}||_2^2$, $\vy \in \sR^d$ has a constant Hessian and therefore constant Hessian eigenvalues $\lambda$,
\begin{align}
    \nabla^2_{\hat{\vy}} l(\vy, \hat{\vy}) = \mI_d,\quad\lambda_{1:d} = 1.
\end{align}
\end{proposition}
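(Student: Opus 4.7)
The plan is to carry out the differentiation directly, since the loss is a pure quadratic in $\hat{\vy}$ with $\vy$ treated as a constant target. First I would expand the squared norm componentwise as $l(\vy,\hat{\vy}) = \tfrac{1}{2}\sum_{i=1}^d (y_i - \hat{y}_i)^2$, so that the partial derivatives decouple across coordinates.

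Next, I would compute the gradient, $\partial l / \partial \hat{y}_i = -(y_i - \hat{y}_i) = \hat{y}_i - y_i$, and then differentiate once more to obtain the mixed partials $\partial^2 l / (\partial \hat{y}_i \partial \hat{y}_j) = \delta_{ij}$. Assembling these into the Hessian matrix yields $\nabla^2_{\hat{\vy}} l(\vy,\hat{\vy}) = \mI_d$, which is manifestly independent of both $\vy$ and $\hat{\vy}$ (hence the `constant Hessian' claim).

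Finally, since $\mI_d$ is diagonal with every diagonal entry equal to $1$, its spectrum is $\lambda_{1:d} = 1$, completing the proof. There is no real obstacle here: the only thing to be careful about is the sign convention in the gradient (the chain rule introduces the minus sign from differentiating $y_i - \hat{y}_i$ w.r.t. $\hat{y}_i$), but this minus sign cancels in the second derivative and does not affect the Hessian.
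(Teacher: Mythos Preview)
Your proposal is correct and matches the paper's approach exactly: the paper's own proof simply reads ``Standard calculus,'' and your componentwise differentiation is precisely that calculation spelled out. There is nothing to add.
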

\begin{proof}
Standard calculus.
\end{proof}

\begin{proposition}
The cross entropy loss, $l(\vy, \hat{\vy}) = -\sum_{i=1}^d y_{d}\log \hat{y}_{d}$ has the following Hessian and eigenvalue bounds given the model ${\hat{\vy} = \text{Softmax}(\vf_\vtheta(\vx))}$ where $y_i \geq \epsilon$,
\begin{align}
    \nabla_\vf^2 l(\vy, \vf_\vtheta( \vy)) = \mathrm{diag}(\hat{\vy}) - \hat{\vy}\hat{\vy}^\top, 
    \quad
    0 \leq \lambda_i \leq 1,
\end{align}
as $\sum_{i=1}^d y_i = 1, 0 \leq y_i \ \leq 1$.
The Hessian is singular due to the loss of degree-of-freedom in categorical probabilities.
\end{proposition}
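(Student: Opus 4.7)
The plan is to verify the three claims in the statement—the closed form of the Hessian, the eigenvalue bounds, and singularity—essentially independently, since each follows from a different elementary tool.

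For the Hessian formula, I would first simplify the loss using $\sum_i y_i = 1$: writing $Z = \sum_j e^{f_j}$ gives $l = -\sum_i y_i f_i + \log Z$, so $\nabla_\vf l = \hat{\vy} - \vy$. Second-order differentiation then reduces to computing the softmax Jacobian, and a direct calculation of $\partial \hat{y}_k / \partial f_j = \hat{y}_k(\delta_{kj} - \hat{y}_j)$ assembles into exactly $\mathrm{diag}(\hat{\vy}) - \hat{\vy}\hat{\vy}^\top$. Note that $\vy$ does not appear in the Hessian (it only enters through the first-order term), so the hypothesis $y_i \geq \epsilon$ plays no role in the final formula and is presumably inherited from an earlier argument.

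For the eigenvalue bounds, my plan is to recognize $\mH := \mathrm{diag}(\hat{\vy}) - \hat{\vy}\hat{\vy}^\top$ as the covariance matrix of a categorical random vector: letting $X$ be the one-hot random variable with $P(X = \ve_i) = \hat{y}_i$, one has $\mathbb{E}[XX^\top] = \mathrm{diag}(\hat{\vy})$ and $\mathbb{E}[X] = \hat{\vy}$, so $\mH = \mathrm{Cov}(X) \succeq \vzero$, giving $\lambda_i \geq 0$. For the upper bound I would invoke Lemma \ref{lem:weyl} on the decomposition $\mH = \mathrm{diag}(\hat{\vy}) + (-\hat{\vy}\hat{\vy}^\top)$: the first summand has eigenvalues $\hat{y}_i \in [0,1]$, and the second is negative semidefinite with $\lambda_{\max} \leq 0$, so $\lambda_{\max}(\mH) \leq 1 + 0 = 1$. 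Singularity is immediate from the observation that $\mH \vone = \hat{\vy} - \hat{\vy}(\hat{\vy}^\top \vone) = \hat{\vy} - \hat{\vy} = \vzero$, so $\vone \in \ker \mH$ and hence $0$ is always an eigenvalue.

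The main obstacle is minor bookkeeping: the softmax Jacobian calculation is standard but sign-sensitive, and applying Weyl's inequality requires keeping track of which summand's largest eigenvalue controls which term. Otherwise the argument is a straightforward assembly of calculus and basic linear algebra, and no non-routine step is required.
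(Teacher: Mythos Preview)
Your proposal is correct. The paper's own proof consists of the single phrase ``Standard calculus,'' so your argument is not so much a different route as a fully worked-out version of what the paper leaves implicit. Your use of the covariance interpretation for positive semidefiniteness and of Lemma~\ref{lem:weyl} for the upper bound are clean choices; one could alternatively bound $\vv^\top \mH \vv$ directly, but your approach is equally valid and arguably more illuminating. The observation that the hypothesis $y_i \geq \epsilon$ is unused is also correct.
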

\begin{proof}
Standard calculus.
\end{proof}

For Proposition \ref{prop:mse_hess} and \ref{prop:ce_hess}, we use that the objective's Hessian can be decomposed using the chain rule,
\begin{align}
    \nabla^2_\vtheta \gL(\vtheta,\vy, \vx)
    &=
    {\nabla_\vtheta \vf_\vtheta(\vx)}^\top
    \,
    \nabla^2_\vf l(\vy, \vf_\vtheta( \vx))
    \,
    \nabla_\vtheta \vf_\vtheta(\vx)
    +
    \nabla_\vf l(\vy, \vf_\vtheta(\vx) \nabla^2_\vtheta \vf_\vtheta( \vx),\notag\\
    &=
    \vg_\vtheta( \vx)^\top \mH_l(\vtheta,\vx, \vy)\vg_\vtheta(\vx)
    + \vg_l(\vtheta, \vx,\vy)^\top \mH_\vtheta( \vx).\label{eq:decomp_hessian}
\end{align}

\begin{proof}[Proof for Proposition \ref{prop:mse_hess}]
The first term of Equation \ref{eq:decomp_hessian} has a rank of 1 as it's an outer product, and $\vg(\vtheta,\vx)^\top\vg(\vtheta,\vx) = ||\vg||_2^2 \leq L_\vf^2$, so its eigenvalues  $\lambda_i \in [0, L_\vf^2]$.
Using Assumption \ref{ass:hess}, the eigenvalues of the second term are bounded by $[-m |g|_{\text{max}} \lambda_m^f, m |g|_{\text{max}} \lambda_m^f]$.
As the gradient elements cannot be upper bounded (i.e., Proposition \ref{prop:mse_grad}), the Hessian of the loss has eigenvalue range 
$[\mu^2 - 2m|g|_\text{max}\lambda_m^f, \mu^2 + 2m|g|_\text{max}\lambda_m^f]$,
which leads to an unbounded condition number due to both the largest eigenvalue $\rightarrow \infty$ and the the case that the smallest eigenvalue is $0$ when adding eigenvalues from both terms due to Weyl's theorem (Lemma \ref{lem:weyl}).
\end{proof}

\begin{proof}[Proof for Proposition \ref{prop:ce_hess}]
The first term in Equation \ref{eq:decomp_hessian} has eigenvalue bounds $[0, L_\vf^2]$ (see previous proof).
It's positive semi-definite so we know 0 is a lower bound on the eigenvaluse.
Since the max eigenvalue is non-zero, we know the Frobenius norm of $\vg$ is greater or equal to than the trace of the outer product, and the trace is also the sum of eigenvalues, so we can bound the (largest) non-zero eigenvalue by $L_\vf^2$. 
The second term in Equation \ref{eq:decomp_hessian} has range $\lambda_i \in [-2 \lambda_m^f, 2 \lambda_m^f]$, as they are bounded by
$[-\sum_i |g_i| \lambda_m^H, \sum_i |g_i| \lambda_m^H]$
and $0\geq |g_i|\geq 1$, $\sum_i g_i = 0$.
With Weyl's theorem (Lemma \ref{lem:weyl}) we have
$\lambda \in[\mu^2 - 2\lambda_m^H, \mu^2 + 2 \lambda_m^H + L_\vf^2]$.
If $\mu^2 = 2\lambda_m^H + \epsilon$, then we have
$\lambda \in[\epsilon , 4\lambda_m^H + \epsilon + L_\vf^2]$, so
$$
\kappa \leq \frac{4\lambda_m^f + L_\vf^2 + \epsilon}{\epsilon}
$$
which concludes the proof.
Unsurprisingly, the upper bound is only finite if the regularization ensures positive definiteness of the objective's Hessian.
\end{proof}

\end{document}